\documentclass{llncs}
\usepackage{amsfonts,amsmath}
\usepackage{xspace}                           
\usepackage{booktabs}                                     
\usepackage{graphics}
\usepackage{tikz}
\usepackage{dsfont}



\newcommand{\prob}[1]{\Pr\left(#1\right)}
\newcommand{\expect}[1]{\mathbf{E}\left[#1\right]}

\newcommand{\chilow}{\chi_{\text{low}}}
\newcommand{\chihigh}{\chi_{\text{high}}}

\newcommand{\psel}{\ensuremath{p_\mathrm{sel}}\xspace}
\newcommand{\pmut}{\ensuremath{p_\mathrm{mut}}\xspace}
\newcommand{\pmuta}{\ensuremath{p_\mathrm{mut}^\mathrm{adapt}}\xspace}
\newcommand{\pmutm}{\ensuremath{p_\mathrm{mut}^\mathrm{mix}}\xspace}
\newcommand{\mut}{\ensuremath{\mathrm{mut}}\xspace}

\newcommand{\chimax}{\ensuremath{\chi_{\max{}}}}

\newcommand{\hdist}{\ensuremath{\mathrm{H}}}

\DeclareMathOperator{\bin}{Bin}
\DeclareMathOperator{\unif}{Unif}

\newcommand{\strp}{\ensuremath{p}} 

\usepackage{mathrsfs}

\newcommand{\ab}{\hspace{0.125em}}                        
\newcommand{\ie}{\hbox{i.\ab e.}\xspace}                  
\newcommand{\eg}{\hbox{e.\ab g.}\xspace}                  

\usepackage[noend]{algorithm,algorithmic}
\floatname{algorithm}{Algorithm}



\newcommand{\leadingones}{\text{\sc LeadingOnes}\xspace}
\newcommand{\Lo}{\text{\sc Lo}\xspace}

\usepackage{enumitem}

\usepackage{color}



\pagenumbering{arabic}
\usepackage{microtype}

\setlength{\abovecaptionskip}{1ex}
\setlength{\belowcaptionskip}{1ex}
\setlength{\floatsep}{1ex}
\setlength{\textfloatsep}{1ex}
\setlength{\intextsep}{1ex}

\begin{document}

\title{Self-adaptation of Mutation Rates\\in Non-elitist Populations}
\titlerunning{Self-adaptation of Mutation Rates in Non-elitist Populations}

\author{
  Duc-Cuong Dang\inst{1},  Per Kristian Lehre\inst{1}}
\authorrunning{D.-C. Dang and P. K. Lehre}
\tocauthor{D.-C. Dang and P. K. Lehre}
\institute{
School of Computer Science, University of Nottingham, United Kingdom \\
  \email{\{duc-cuong.dang,PerKristian.Lehre\}@nottingham.ac.uk}}

\maketitle

\begin{abstract}
  The runtime of evolutionary algorithms (EAs) depends critically on
  their parameter settings, which are often
  problem-specific. Automated schemes for parameter tuning have been
  developed to alleviate the high costs of manual parameter tuning.
  Experimental results indicate that self-adaptation, where parameter
  settings are encoded in the genomes of individuals, can be effective
  in continuous optimisation.  However, results in discrete
  optimisation have been less conclusive. Furthermore, a rigorous
  runtime analysis that explains how self-adaptation can lead to
  asymptotic speedups has been missing.  This paper provides the first
  such analysis for discrete, population-based EAs. We apply
  level-based analysis to show how a self-adaptive EA is capable of
  fine-tuning its mutation rate, leading to exponential speedups over
  EAs using fixed mutation rates.
\end{abstract}

\section{Introduction}\label{sec:intro}

An obstacle when applying
Evolutionary Algorithms~(EAs) is that their efficiency depends
crucially, and sometimes unpredictably, on their parameter settings,
such as population size, and mutation rates.
\emph{Parameter tuning} \cite{bib:Eiben2007}, where the parameters are
fixed before running the algorithm, is the most common way of choosing
the parameters.
A weakness with parameter
tuning is that optimal parameter settings may depend
on the current state of the search
process.
In contrast, \emph{parameter control} allows the parameters to change during
the execution of the algorithm, \eg according to a fixed schedule
(e.g. as in simulated annealing),
through feedback from the search, or via self-adaptation
\cite{bib:Eiben2007}.
Adaptive parameters can be essential and advantageous (\eg
covariance-matrix adaptation \cite{bib:Hansen2001}) in continuous
search spaces.  In discrete spaces, it has been
shown that changing the mutation rate as a function
of the current fitness \cite{bib:Bottcher2010} can improve the runtime,
and the $1/5$-rule has been used to adapt the population size \cite{bib:Doerr2015}.

While previous studies have shown the benefit of adaptive parameters, only
global parameters were analysed.
Our focus is different, we look at
the so-called ``evolution of evolution'' or true self-adaptation
\cite{bib:Eiben2007}, in which the parameter is encoded in the genome
of individual solutions. As far as we know, the existing studies on
this topic from the EC literature is mostly experimental
\cite{bib:Back1992,bib:Eiben2007,bib:Rijn2015}, or about proving the
convergence of the population model at their limit \cite{bib:Back1992},
\ie infinite population.

We study how mutation rates can evolve within a \emph{non-elitist}
population, where the mutation rate of each individual is encoded by
its own genome. The rate at which the mutation rate mutates is
specified by a \emph{strategy parameter} $p$. In \emph{endogenous}
control, the strategy parameter is itself evolved
\cite{bib:Back1992,bib:Xue2015}. Here, we consider \emph{exogenous}
control of the strategy parameter $p$, where the value of the
parameter is fixed before the run.  Our contribution is twofold: using
\leadingones as a benchmark, we provide the necessary and sufficient
conditions, especially those on $p$, for self-adaptation to work; by
making a small modification of the function, we show that
self-adaptation is essential in optimising the modified function, more
precisely that a single mutation rate or uniform mixing of mutation
rates requires exponential time, while self-adaptation is
efficient. We also prove that a non-elitist EA can outperform the
($\mu$+$\lambda$) EA.

\section{Preliminaries}\label{sec:prelim}
For any $n\in\mathbb{N}$, define $[n]:=\{1,\dots, n\}$. The natural
logarithm is denoted by $\ln(\cdot)$, and the logarithm to the base
$2$ is denoted by $\log(\cdot)$. For $x \in \{0,1\}^n$, we
write $x(i)$ for the $i$-th bit value. The Hamming distance is
denoted by $\hdist(\cdot,\cdot)$ and the Iverson bracket by
$[\cdot]$.
Given a partition of a search space $\mathcal{X}$ into $m$ ordered
``levels'' $(A_1,\dots,A_m)$, we define $A_{\geq j}:=\cup_{i=j}^{m} A_i$.
A \emph{population} is a vector
$P\in\mathcal{X}^\lambda$, where the $i$-th element $P(i)$ is
called the $i$-th \emph{individual}. Given $A \subseteq \mathcal{X}$, we let
$|P \cap A|:= |\{i \mid P(i) \in A\}|$ be the number of
individuals in population $P$ that belong to the subset $A$.
All algorithms considered here are of the form of
Algorithm~\ref{algo:Algorithm1}
\cite{bib:Dang2014a}.  A new population $P_{t+1}$ is 
generated by independently sampling
$\lambda$ individuals from an existing population $P_t$ according to
$\psel$, and perturbing each of the sampled individuals by a variation
operator $\pmut$.  The selection mechanism $\psel$ implicitly embeds a
fitness function $g:\mathcal{Y}\rightarrow\mathbb{R}$.

\begin{algorithm}
  \caption{\cite{bib:Dang2014a}}
  \begin{algorithmic}[1]
    \REQUIRE Finite search space $\mathcal{Y}$ with an initial population $P_0\in \mathcal{Y}^\lambda$.
    \FOR{$t=0,1,2,\dots$ until a termination condition is met}
    \FOR{$i=1$ to $\lambda$}    
    \STATE Sample $I_t(i)\in[\lambda]$ according to $\psel(P_t)$, and set $x := P_t(I_t(i))$.\label{algo:Algorithm1:selection}
    \STATE Sample $x'\in\mathcal{Y}$ according to $\pmut(x),$ and set $P_{t+1}(i) := x'$. \label{algo:Algorithm1:psva-mutation}
    \ENDFOR
    \ENDFOR
  \end{algorithmic}
  \label{algo:Algorithm1}
\end{algorithm}

We consider the standard bitwise mutation operator, where for any
pair of bitstrings $x,x'\in\{0,1\}^n$ and any \emph{mutation rate}
$\chi \in (0,n]$, the probability of obtaining $x'$ from $x$ is
$
  \prob{x'=\mut(x, \chi)}  = \left(\chi/n\right)^{H(x,x')}\left(1-\chi/n\right)^{n-H(x,x')}.
$ 
To model the parameter control problem, we assume that 
Algorithm~\ref{algo:Algorithm1} must choose mutation rates
from a predefined set $\mathcal{M}$.

\textbf{Uniform mixing}, denoted $\pmutm$, chooses the mutation rate $\chi$
uniformly at random from the set $\mathcal{M}$ every time an
individual is mutated,
$
   \pmutm(x)
        := \mut(x, \chi), $
  where $\chi \sim\unif({\mathcal{M}})$.

The special case of $|\mathcal{M}| = 1$, i.e. a fixed mutation rate,
has been studied extensively
\cite{bib:Dang2014a,bib:Lehre2012a}. 

Here, we focus on $|\mathcal{M}|>1$. It is known
that such mixing of mutation operators can be beneficial
\cite{bib:Lehre2013,Doerr2015UnknownLength}.

\textbf{Self-adaptation} uses an extended search space
$\mathcal{Y} = \mathcal{X} \times \mathcal{M}$, where each element
$(x,\chi)$ consists both of a search point $x \in \mathcal{X}$ and 
a mutation rate $\chi \in \mathcal{M}$. A 
fitness function $g:\mathcal{Y}\rightarrow\mathbb{R}$ is defined by 
$g((x,\chi)) := f(x)$ for all 
$(x,\chi)\in \mathcal{Y}$. The mutation operator $\pmut$ is written
as $\pmuta$ and it is parameterised by a globally fixed parameter 
$\strp \in (0,1/2]$ such that
    $\pmuta((x,\chi)) :=  (x',\chi')$
   where $\chi'=\chi$ with probability $1-p$, and
         $\chi'\sim\unif(\mathcal{M}\setminus\{\chi\})$ otherwise, and
   $x' = \mut(x,\chi')$.

We analyse the runtime of Algorithm~\ref{algo:Algorithm1} using the 
level-based theorem \cite{bib:Corus2014}. This theorem applies to any population-based process
where the individuals in $P_{t+1}$ are sampled
independently from the same distribution $D(P_t)$, where $D$ 
maps populations $P_t$ to distributions over the search space $\mathcal{X}$.  In
Algorithm~\ref{algo:Algorithm1}, the map is
$D =\pmut \circ \psel$, i.e., composition of selection and mutation.

\begin{theorem}[\cite{bib:Corus2014}]\label{thm:level-based}
Given a partition
    $(A_1,\dots,A_{m+1})$ of $\mathcal{X}$, define $T :=
    \min\{t\lambda \mid |P_t\cap A_{m+1}|>0\}$ to be the first point
    in time that elements of $A_{m+1}$ appear in $P_t$ of Algorithm
    \ref{algo:Algorithm1}. If there exist parameters $z_1,\dots,z_m,
    z_*\in(0,1]$, $\delta>0$, a constant $\gamma_0 \in (0,1)$ and a
    function $z_0:(0,\gamma_0)\rightarrow\mathbb{R}$ such that for all
    $j\in[m]$, $P\in \mathcal{X}^\lambda$, $y\sim D(P)$ and $\gamma
    \in (0,\gamma_0]$ we have
  \begin{description}[noitemsep]
  \item[(G1)] $\displaystyle
    \Pr\left( y\in A_{\geq j}
      \mid |P\cap A_{\geq j-1}|\geq\gamma_0\lambda   
    \right)\geq z_j\geq z_*$
  \item[(G2)] $\displaystyle
    \Pr\left( y\in A_{\geq j}
      \mid |P\cap A_{\geq j-1}|\geq\gamma_0\lambda,
      |P\cap A_{\geq j}|\geq \gamma \lambda   
    \right)\geq z_0(\gamma)\geq (1+\delta)\gamma$
  \item[(G3)] $\displaystyle \lambda \geq
    \frac{2}{a}\ln\left(\frac{16m}{ac\varepsilon z_*}\right)$
    with $\displaystyle a = \frac{\delta^2 \gamma_0}{2(1+\delta)}$,
    $\varepsilon = \min\{\delta/2,1/2\}$
    and $c = \varepsilon^4/24$
  \end{description}
  then
  $\expect{T} \leq (2/c\varepsilon)(m\lambda(1+\ln(1+c\lambda)) +\sum_{j=1}^{m}1/z_j)$.
\end{theorem}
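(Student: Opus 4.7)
The plan is to track a single integer statistic of the population, namely the \emph{current level} $\ell(P_t) := \max\{j\in\{0,1,\dots,m\} : |P_t\cap A_{\geq j}|\geq \gamma_0\lambda\}$, and to bound from above the expected time for $\ell$ to progress from $0$ to $m+1$. Since the offspring are sampled i.i.d.\ from $D(P_t)$, conditioned on $P_t$ the count $|P_{t+1}\cap A_{\geq j}|$ is binomial, which is the key structural fact that condition (G3) will exploit through Chernoff bounds. I would split the analysis by level: for each $j\in[m]$, I bound the expected number of generations from the first moment $\ell=j$ until $\ell\geq j+1$, and then sum.

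For a fixed level $j$, I would divide the work into two sub-phases. \textbf{Initiation:} starting from $|P\cap A_{\geq j}|\geq \gamma_0\lambda$ but $|P\cap A_{\geq j+1}|=0$, condition (G1) guarantees that each offspring lands in $A_{\geq j+1}$ with probability at least $z_j\geq z_*$, so the expected number of offspring until one such individual appears is $O(1/z_j)$, and hence the expected number of generations is $O(1/(\lambda z_j))$; multiplying by $\lambda$ for the cost-per-generation gives the $\sum_j 1/z_j$ term. \textbf{Takeover:} once at least one individual lies in $A_{\geq j+1}$, condition (G2) says that if a fraction $\gamma\lambda$ of the population lies in $A_{\geq j+1}$ (with $\gamma\leq\gamma_0$), the expected fraction in the next generation is at least $(1+\delta)\gamma$. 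Combined with the binomial concentration of the offspring counts, this yields a multiplicative drift on the fraction, and standard multiplicative-drift arguments bound the expected time to reach $\gamma_0\lambda$ individuals in $A_{\geq j+1}$ by $O(\log(\gamma_0\lambda)/\delta) = O(\ln(1+c\lambda))$ generations per level, contributing the $m\lambda(1+\ln(1+c\lambda))$ term.

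The main obstacle, and the step where (G3) is indispensable, is ensuring that the population does not ``slip back'': during either sub-phase for level $j$, we must keep $|P_t\cap A_{\geq j-1}|\geq \gamma_0\lambda$ so that (G1) and (G2) continue to apply. Here one uses the binomial structure and a Chernoff-type lower tail: if the expected fraction at levels $\geq j-1$ is at least $(1+\delta)\gamma_0$, then the probability of dropping below $\gamma_0\lambda$ is at most $\exp(-a\lambda)$ with $a=\delta^2\gamma_0/(2(1+\delta))$. The constraint on $\lambda$ in (G3) is precisely what makes this failure probability small enough, via a union bound, over the polynomial number of generations we plan to wait in each phase. Handling this maintenance correctly requires a careful coupling or stopping-time argument, because the event that the population slips back is correlated across generations; the cleanest route is to define a ``good'' run in which no slip occurs within a suitable horizon and to restart the analysis on a bad run, paying only a constant factor in expectation.

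Finally I would assemble the bound. Summing the expected contributions from all $m$ initiation sub-phases gives $\sum_{j=1}^{m}1/z_j$ (in offspring), and summing the $m$ takeover sub-phases gives $m\lambda(1+\ln(1+c\lambda))$ (in offspring). The constants $c=\varepsilon^4/24$ and the prefactor $2/(c\varepsilon)$ arise from the Chernoff/drift parameters chosen to guarantee that, except with a geometrically small probability, both sub-phases terminate within their expected number of generations; geometric restarts then convert these high-probability bounds into the stated expectation. The overall runtime bound follows by linearity of expectation over the $m$ levels, which is exactly the claimed expression for $\expect{T}$.
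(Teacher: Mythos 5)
First, a point of reference: the paper does not prove this theorem at all — it is imported verbatim from the cited work of Corus et al., so there is no in-paper proof to compare against. That said, your architecture (discovery of a new level via (G1), multiplicative growth of the new sub-population via (G2), and (G3) supplying enough concentration to keep $|P_t\cap A_{\geq j-1}|\geq\gamma_0\lambda$ at the levels already conquered) is the right reading of what the three conditions are for, and your accounting of which condition produces which term of the bound is correct.

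The genuine gap is in the takeover sub-phase. When $|P_t\cap A_{\geq j+1}|=k$ for small $k$ (in particular $k=1$ right after discovery), (G2) gives each offspring probability at least $(1+\delta)k/\lambda$ of landing in $A_{\geq j+1}$, so the next count is stochastically at least $\bin(\lambda,(1+\delta)k/\lambda)$ with mean $(1+\delta)k$. For constant $k$ this is essentially Poisson with constant mean: it equals $0$ with constant probability, and no Chernoff bound concentrates it. The sub-population at the new level is thus a supercritical branching-type process that goes extinct with probability $1-\Theta(\delta)$ after each discovery, and ``standard multiplicative-drift arguments'' do not apply — the multiplicative drift theorem is for decreasing processes with a positive target, not for growth processes absorbable at $0$. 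To repair this you must either (i) explicitly account for the $\Theta(1/\delta)$ (or worse) expected number of extinction-and-rediscovery cycles per level, bounding also the length of each failed excursion, which re-invokes the $1/z_j$ cost and is where the large prefactor $2/(c\varepsilon)$ would have to come from; or (ii) do what the actual proof of the cited theorem does: run a single additive-drift argument on a potential of the form $(m+1-Y_t)-\mathrm{const}\cdot\ln(1+cX_t)/\ln(1+c\lambda)$, where $Y_t$ is the current level and $X_t$ the count above it. The concavity of $x\mapsto\ln(1+cx)$ is precisely what converts the multiplicative expectation bound of (G2) into a uniform additive drift valid down to $X_t=0$, eliminating both the extinction case and the restart bookkeeping in one stroke. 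Your slip-back/restart handling is legitimate in principle (the bound must hold from an arbitrary initial population, so restarting costs only a constant factor), but the small-count regime of the takeover phase is the crux of the theorem and is not covered by the tools you invoke.
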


We apply the \emph{negative drift theorem for populations}
\cite{bib:Lehre2010} to obtain tail bounds on the runtime of
Algorithm~\ref{algo:Algorithm1}.  For any individual $P_t(i)$ in
Algorithm~\ref{algo:Algorithm1:psva-mutation} where $t\in\mathbb{N}$
and $i\in[\lambda]$, define
$R_t(i) := |\{j\in[\lambda]\mid I_t(j)=i\}|$, i.e., the number of
times the individual was selected. We define the \emph{reproductive
  rate} of the individual $P_t(i)$ to be $\expect{R_t(i)\mid P_t}$,
i.e., the expected number of offspring from individual
$P_t(i)$. Informally, the theorem states that if all individuals close
to a given search point $x^*\in\mathcal{X}$ have reproductive rate below
a certain threshold $\alpha_0$, then the algorithm needs exponential time to
reach $x^*$.  The threshold depends on the mutation
rate. Here, we derive a variant of this theorem for algorithms that
use multiple mutation rates. In particular, we assume that the
algorithm uses $m$ mutation rates, where mutation rate $\chi_i/n$ for
$i\in[m]$ is chosen with probability $q_i$. The proof of this theorem
is similar to that of Theorem~4 in \cite{bib:Lehre2010}, and thus
omitted

\begin{theorem}\label{thm:negdrift-generalised}
  For any $x^*\in\{0,1\}^n$, define $T :=\min \{t\mid x^*\in P_t \},$
  where $P_t$ is the population of Algorithm~\ref{algo:Algorithm1} at
  time $t\in\mathbb{N}$. If there exist constants
  $\alpha_0,c,c',\delta>0$ such that with
  probability $1-e^{-\Omega(n)}$
  \begin{itemize}
  \item the initial population satisfies $H(P_0,x^*)\geq c'n$ 
  \item for all $t\leq e^{cn}$ and $i\in[\lambda]$,
        if $H(P_t(i),x^*)\leq c'n,$ then the reproductive rate of 
        individual $P_t(i)$ is no more than $\alpha_0$,
  \item $\sum_{j=1}^m q_j e^{-\chi_j}\leq (1-\delta)/\alpha_0$, and
        $\max_j\chi_j\leq \chimax$ for a constant $\chimax$, 
  \end{itemize}
  then $\prob{T\leq e^{c''n}} = e^{-\Omega(n)}$ for a constant $c''>0$.
\end{theorem}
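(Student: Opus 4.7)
I plan to mimic the proof of Theorem~4 in~\cite{bib:Lehre2010}, the only new ingredient being that the mixture of mutation rates is absorbed into the per-generation expectation by linearity, so that wherever the single-rate proof uses the contraction factor $e^{-\chi}$, the multi-rate proof uses its mixture average $\sum_{j=1}^m q_j e^{-\chi_j}$. The third hypothesis asserts that this average still satisfies the subcritical condition $\alpha_0\sum_j q_j e^{-\chi_j}\le 1-\delta$, so Lehre's argument carries over.

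The backbone is a branching/lineage-tracing estimate. The first hypothesis gives $\hdist(P_0(i),x^*)\ge c'n$ for all $i$ with probability $1-e^{-\Omega(n)}$, so for $x^*$ to appear by time $t$ there must exist some lineage of length at most $t$ whose cumulative Hamming-distance decrease reaches $c'n$. The expected number of length-$t$ lineages rooted in $P_0$ is at most $\lambda\alpha_0^t$ by the second hypothesis on the reproductive rate inside the ball of radius $c'n$ around $x^*$. A Cram\'er--Chernoff estimate then bounds the probability that any fixed lineage attains the required cumulative decrease by $e^{-\eta c'n}\prod_s \expect{e^{-\eta\Delta_s}}$ for any $\eta>0$, where $\Delta_s$ is the signed Hamming-distance change induced by generation $s$.

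The main calculation is the per-step MGF. Conditioning on the rate $\chi_j/n$ drawn at step $s$, and on a parent at distance $d\le c'n$, bit-independence gives
\[
\expect{e^{-\eta\Delta_s}\mid\chi_j}
 = \left(1+\tfrac{\chi_j}{n}(e^{-\eta}-1)\right)^{n-d}\left(1+\tfrac{\chi_j}{n}(e^{\eta}-1)\right)^d,
\]
which, using $1+u\le e^u$ and the uniform bound $\chi_j\le\chimax$, is at most $e^{\chi_j(e^{-\eta}-1)+2c'\chi_j\sinh(\eta)}$. Choosing $\eta$ large (so that $e^{-\eta}\approx 0$) and then $c'$ correspondingly small, this can be forced below $e^{-\chi_j}(1+\varepsilon)$ for any prescribed $\varepsilon>0$, uniformly over $\chi_j\le\chimax$. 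Averaging over $j$ with weights $q_j$ (since the rate is drawn independently each step, linearity of expectation applies) and multiplying by $\alpha_0$, the third hypothesis yields
\[
\alpha_0\expect{e^{-\eta\Delta_s}}
 \le \alpha_0(1+\varepsilon)\sum_{j=1}^m q_j e^{-\chi_j}
 \le (1+\varepsilon)(1-\delta) \le 1-\delta/2
\]
for $\varepsilon$ small. Combining with the lineage union bound gives $\expect{|\{\text{descendants with cumulative decrease}\ge c'n\}|}\le\lambda\cdot e^{-\eta c'n}(1-\delta/2)^t$, and summing over $t\le e^{c''n}$ by Markov's inequality gives $\prob{T\le e^{c''n}}=e^{-\Omega(n)}$.

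The main obstacle I anticipate is not the MGF computation itself (it factors cleanly over the mixture by linearity) but the bookkeeping for lineages whose ancestors have wandered outside the ball $\hdist(\cdot,x^*)\le c'n$, where the reproductive-rate bound does not apply. As in~\cite{bib:Lehre2010}, one restricts attention to the final in-ball segment of each lineage and uses the bound $\max_j\chi_j\le\chimax$ to show that entry/exit events happen slowly enough not to spoil the subcritical estimate; this bound is precisely what also controls the Taylor corrections in the MGF uniformly across the mixture. No new argument is required beyond this, which is why the authors can claim the proof is ``similar'' to that of Theorem~4 in~\cite{bib:Lehre2010}.
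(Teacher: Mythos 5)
Your proposal is correct and follows essentially the same route as the paper: the only genuinely new ingredient over the single-rate case is the per-step MGF of the Hamming-distance change, averaged over the rate mixture by conditioning on the drawn rate, with the correction term controlled uniformly via $\max_j\chi_j\leq\chimax$ so that the third hypothesis yields a subcritical product with $\alpha_0$. The paper packages the remaining branching/lineage bookkeeping by invoking Theorem~1 of \cite{bib:Lehre2010} and only verifying its drift condition (with $\kappa$ playing the role of your $\eta$ and the interval $[b(n)/2,b(n)]$ playing the role of your shrunken ball), which is exactly the structure you sketch.
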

\begin{proof}[of Theorem~\ref{thm:negdrift-generalised}]
  We apply Theorem~1 in \cite{bib:Lehre2010}. The \emph{first condition}
  holds immediately. We use the distance function $g(x):=H(x,x^*)$
  over the interval $[b(n)/2,b(n)]$, where $b(n):=n/(e^{2\kappa}-1)$
  and
  $\kappa:=\max\left\{\ln(2),\ln\left(\frac{4\chimax}{\ln(1+\delta)}\right)\right\}$.
  Without loss of generality, we assume that $x^*=1^n$, hence $g(x)$
  is the number of 0-bits in $x$.

For the \emph{second condition}, the drift of the process 
$\Delta(i) := (g(X_{t+1})-g(X_t)\mid g(X_t)=i)$ is a sum
$\Delta(i)=\Delta^+(i)-\Delta^-(i)$ of two binomially distributed random variables,
the number of 1-bits flipped $\Delta^+(i)\sim\bin(n-i,Q)$,
and the number of 0-bits flipped, where $Q\sim q$. For $i<b(n)$,  we use $\exp(np(e^{\kappa }-1))$ as an upper
bound on the mgf of a binomially distributed random variable with 
parameters $n$ and $p$, and get
\begin{align*}
  \expect{e^{-\kappa \Delta(i)}\mid X_t} 
  & = 
    \expect{\expect{e^{-\kappa (\Delta^-(i)-\Delta^+(i))}\mid Q}\mid X_t} \\
  & \leq \sum_{j=1}^m q_j 
    \exp\left(  (n-i)(\chi_j/n)(e^{-\kappa}-1)
         + i(\chi_j/n)(e^\kappa-1)
        \right)\\
  & = \sum_{j=1}^m q_j 
    \exp\left(  -\chi_j(1-e^{-\kappa})
         + (i/n)\chi_j(e^\kappa- e^{-\kappa}))
        \right)\\
  & =  \sum_{j=1}^m q_j
    \exp\left( -\chi_j(1-2e^{-\kappa}) \right).
\end{align*}

Noting that $e^{-\kappa}\leq \ln(1+\delta)/(4\chimax)$, we get 
\begin{align*}
    \expect{e^{-\kappa \Delta(i)}\mid X_t} 
  & \leq \sum_{j=1}^m q_j \exp\left( -\chi_j\right)\exp\left(\chimax 2e^{-\kappa}) \right)\\
  & \leq \sum_{j=1}^m q_j \exp\left( -\chi_j\right)(1+\delta)^{1/2} 
    \leq \frac{1}{\alpha_0(1+\delta)^{1/2}}.
\end{align*}

The second condition is then satisfied. The \emph{third} and
\emph{fourth conditions} can be satisfied for any mutation rate $\chi/n$
for appropriate positive constants $\delta_2,\delta_3\in(0,1)$ and
$D(n)$, as long as $\kappa(n)\geq \ln(2)$ (see the proof of Theorem~4
in \cite{bib:Lehre2010}).\qed
\end{proof}

\section{General negative results}\label{sec:general-neg-results}

Using Theorem~\ref{thm:negdrift-generalised}, we can now show general negative 
results for uniform mixing and self-adaptation of two mutation rates for any 
function with a unique global optimum $x^*$, assuming that
the initial population is positioned sufficiently far away from $x^*$.
The following theorem is a special case of Theorem~\ref{thm:negdrift-generalised}
for $|\mathcal{M}| = 1$.

\begin{theorem}\label{thm:ineff-high-mut}
  The runtime of Algorithm~\ref{algo:Algorithm1} with reproductive rate $\alpha_0$
  and mutation rate $\chihigh/n\geq (\ln(\alpha_0) + \delta)/n$ for some constant
  $\delta>0$ satisfies $\prob{T\leq e^{cn}}=e^{-\Omega(n)}$
  on any function with a unique global optimum $x^*$
  assuming that
  $\hdist(P_0, x^*)\geq c'n$ for
  two constants $c>0$ and $c' \in (0,1)$.
\end{theorem}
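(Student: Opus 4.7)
The plan is to derive this as an immediate specialisation of Theorem~\ref{thm:negdrift-generalised} in the degenerate case $|\mathcal{M}|=1$. Concretely, I would instantiate that theorem with $m=1$, $q_1=1$, and $\chi_1=\chihigh$. Under this choice the first two bulleted hypotheses of Theorem~\ref{thm:negdrift-generalised} are verbatim restatements of the present assumptions: the initial-distance condition $\hdist(P_0,x^*)\geq c'n$ is given, and the reproductive-rate bound $\alpha_0$ is likewise given. There is essentially nothing to do for these.

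The only non-trivial step is to verify the generating-function condition $\sum_{j=1}^{m} q_j e^{-\chi_j}\leq (1-\delta')/\alpha_0$, which here collapses to $e^{-\chihigh}\leq (1-\delta')/\alpha_0$ for some constant $\delta'>0$. Using $\chihigh\geq \ln(\alpha_0)+\delta$, I would write
\[
  e^{-\chihigh} \;\leq\; e^{-\ln(\alpha_0)-\delta} \;=\; \frac{e^{-\delta}}{\alpha_0} \;=\; \frac{1-(1-e^{-\delta})}{\alpha_0},
\]
so that taking $\delta':=1-e^{-\delta}$, which is strictly positive since $\delta>0$, discharges the condition. The upper-bound requirement $\max_j\chi_j\leq\chimax$ is met by taking $\chimax:=\chihigh$, which is admissible because $\chihigh$ is a fixed problem parameter independent of $n$.

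Applying Theorem~\ref{thm:negdrift-generalised} then yields $\prob{T\leq e^{c''n}}=e^{-\Omega(n)}$ for some constant $c''>0$, which is exactly the claimed bound with $c:=c''$. Since the reduction does all the work, the whole argument is short and has no substantive obstacle; the only subtlety I would flag is that $\chihigh$ has to be treated as a constant in $n$, as otherwise $\chimax$ would cease to be a constant and the reduction would fail. In that regime, however, the very large mutation rate produces an offspring distribution close to uniform on $\{0,1\}^n$, so a direct union bound over the $O(\lambda e^{cn})$ offspring sampled in $e^{cn}$ generations would already give the same conclusion, and so nothing is lost by assuming $\chihigh=O(1)$ in the main argument.
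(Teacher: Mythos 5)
Your proposal is correct and follows exactly the route the paper intends: the paper gives no separate proof of this theorem, stating only that it is the special case $|\mathcal{M}|=1$ of Theorem~\ref{thm:negdrift-generalised}, and your verification that $e^{-\chihigh}\leq e^{-\delta}/\alpha_0=(1-\delta')/\alpha_0$ with $\delta'=1-e^{-\delta}>0$ is precisely the one computation needed. Your remark that $\chihigh$ must be treated as $O(1)$ for the $\chimax$ hypothesis is a fair observation of an implicit assumption the paper leaves unstated.
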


For binary tournament and $(\mu,\lambda)$-selection, $\alpha_0$ is bounded from 
above by $2$ and $\lambda/\mu$ respectively. Hence, any mutation rate above 
$\ln(2)$ for 2-tournament selection and $\ln(\lambda/\mu)$ for 
$(\mu,\lambda)$-selection by a constant renders the EA inefficient.

For $|\mathcal{M}| = 2$, we have the following general result,
again due to Theorem~\ref{thm:negdrift-generalised}.
\begin{theorem}\label{thm:ineff-two-mut}
Consider Algorithm~\ref{algo:Algorithm1} with reproductive rate
$\alpha_0$  and
mutation rates 
$\chilow/n$ and $\chihigh/n$.
If there exist constants $\delta_1,\delta_2,\varepsilon>0$ such that
\begin{itemize}
  \item $\chilow \geq \ln(\alpha_0) - \ln(1 + \delta_1)$ and $\chihigh \geq \ln(\alpha_0) - \ln(1 - \delta_2)$,
  \item the EA chooses mutation rate $\chihigh$ with probability at least $\frac{\delta_1(1+\varepsilon)}{\delta_1+\delta_2}$,
\end{itemize}
then $\prob{T\leq e^{cn}}=e^{-\Omega(n)}$ on
any function with a unique  optimum $x^*$ given that 
$\hdist(P_0, x^*)\geq c'n$ for some 
constants $c',c>0$
\end{theorem}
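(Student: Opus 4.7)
The plan is to reduce the statement to a direct invocation of Theorem~\ref{thm:negdrift-generalised} with $m=2$, so the only real work is to verify the key inequality $\sum_{j=1}^{m} q_j e^{-\chi_j} \leq (1-\delta)/\alpha_0$ on the weighted sum of exponentials of the two mutation rates. Let $q:=q_2$ denote the probability that the EA selects $\chihigh$, so that $1-q\leq 1-\frac{\delta_1(1+\varepsilon)}{\delta_1+\delta_2}$ is the probability of selecting $\chilow$. The other hypotheses of Theorem~\ref{thm:negdrift-generalised} are either given verbatim (the initial distance condition $H(P_0,x^*)\geq c'n$ and the reproductive-rate bound $\alpha_0$) or follow from the fact that both rates are bounded constants, so $\max\{\chilow,\chihigh\}\leq \chimax$ for some constant $\chimax$.

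The first step is to rewrite the two lower bounds on the mutation rates as upper bounds on their exponentials: from $\chilow\geq \ln(\alpha_0)-\ln(1+\delta_1)$ we get $e^{-\chilow}\leq (1+\delta_1)/\alpha_0$, and from $\chihigh\geq \ln(\alpha_0)-\ln(1-\delta_2)$ we get $e^{-\chihigh}\leq (1-\delta_2)/\alpha_0$. Plugging these into the weighted sum gives
\begin{align*}
(1-q)e^{-\chilow} + q\,e^{-\chihigh}
 &\leq \frac{(1-q)(1+\delta_1) + q(1-\delta_2)}{\alpha_0} \\
 &= \frac{1 + \delta_1 - q(\delta_1+\delta_2)}{\alpha_0}.
\end{align*}

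The second step is to use the assumed lower bound on $q$: substituting $q\geq \delta_1(1+\varepsilon)/(\delta_1+\delta_2)$ makes the bracket at most $1+\delta_1-\delta_1(1+\varepsilon) = 1-\varepsilon\delta_1$. Setting $\delta:=\varepsilon\delta_1>0$ therefore yields $\sum_j q_j e^{-\chi_j} \leq (1-\delta)/\alpha_0$, which is exactly the remaining condition of Theorem~\ref{thm:negdrift-generalised}. Invoking that theorem gives $\prob{T\leq e^{c''n}} = e^{-\Omega(n)}$ for some constant $c''>0$, renaming $c''$ to $c$ as in the statement.

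I do not expect any real obstacle here: the whole argument is a one-line convex combination calculation followed by a direct appeal to Theorem~\ref{thm:negdrift-generalised}. The only place that needs a moment of care is checking monotonicity in $q$ of the bracket $1+\delta_1-q(\delta_1+\delta_2)$, so that a lower bound on $q$ translates into an upper bound on the weighted sum; since the coefficient $-(\delta_1+\delta_2)$ is negative, this monotonicity is automatic and any $q$ above the stated threshold works equally well.
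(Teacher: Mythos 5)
Your proposal is correct and matches the paper's own proof essentially line for line: the same convex-combination bound $(1-q_2)(1+\delta_1)+q_2(1-\delta_2) \le 1-\varepsilon\delta_1$ followed by a direct appeal to Theorem~\ref{thm:negdrift-generalised} with $\delta=\varepsilon\delta_1$. The extra remarks on monotonicity in $q$ and on the remaining hypotheses of the negative-drift theorem are fine but add nothing beyond what the paper leaves implicit.
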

\begin{proof}[of Theorem~\ref{thm:ineff-two-mut}]
  We have 
  \begin{align*}
    q_1e^{-\chilow} + q_2e^{-\chihigh} 
    & =  (1-q_2)e^{-\chilow}+q_2e^{-\chihigh} \\
    & \leq ((1-q_2)(1+\delta_1)+q_2(1-\delta_2))/\alpha_0\\
    & =    (1+\delta_1-q_2(\delta_1+\delta_2))/\alpha_0
      \leq (1-\varepsilon\delta_1)/\alpha_0,
  \end{align*}
  which by Theorem~\ref{thm:negdrift-generalised} implies the result.\qed
\end{proof}

Uniform mixing selects the mutation rate $\chihigh/n$ with probability 
$1/2$. Thus, if $\delta_1/(\delta_1+\delta_2)$ is below $1/2$ by a constant 
then the EA is inefficient. For example, in binary tournament, the setting 
$\chilow \geq \ln(3/2) - \ln(100/99)$ and $\chihigh \geq \ln{3} + \ln(33/32)$ 
satisfies the conditions of the theorem for $\delta_1 = 103/297$, $\delta_2 = 
105/297$ and $\delta_1/(\delta_1+\delta_2) = 103/208 < 1/2$. 
Theorem~\ref{thm:self-adapt-succeed} later on will show the efficiency of
self-adaptation in this setting in contrast to the uniform mixing.
In self-adaptation, $\chihigh/n$ is selected with at least probability $p$, thus 
self-adaptation does not work if $p$ is above $\delta_1/(\delta_1 + \delta_2)$ by 
a constant.

\section{Robust self-adaptation}\label{sec:lo-mixed-mutation-rates}

The previous section showed how critically non-elitist EAs depend on
having appropriate mutation rates. A slightly too high mutation rate
$\chihigh$ can lead to an exponential increase in runtime. Uniform
mixing of mutation rates can fail if the 
of mutation rates $\mathcal{M}$ contains one such high mutation rate,
even though the set also contains an
appropriate mutation rate $\chilow.$ 

Self-adaptation has a similar problem if the strategy parameter $p$ is
chosen too high. However, we will prove for a simple, unimodal fitness
function that for a sufficiently small strategy parameter $p$,
self-adaptation becomes highly robust, and is capable of fine-tuning
the mutation rate.  For the rest of this section, we consider a set of
two mutation rates $\mathcal{M}=\{\chilow,\chihigh\}$ which for
arbitrary parameters $\ell\in[n]$ and $\varepsilon>0$ are defined by
$
  \left(1-\frac{\chihigh}{n}\right)^\ell < \frac{\mu}{\lambda} 
  \leq \left(1-\frac{\chihigh}{n}\right)^{\ell-1} 
$ and
$  \frac{\mu}{\lambda}(1+\varepsilon) \leq \left(1-\frac{\chilow}{n}\right)^n.$
By the previous section, if $\ell$ is chosen
sufficiently small, and hence $\chihigh$ sufficiently high, then uniform
mixing will fail on any problem with a unique optimum. In contrast, using a 
Chernoff and a union bound, the following lemma shows that individuals that 
have chosen
$\chihigh$ will quickly vanish from a 
self-adapting population, and the population will be dominated by
individuals choosing the appropriate mutation 
parameter $\chilow$.

\begin{lemma}\label{lemma:lo-few-bad}
  Let $Y_t:=|P_t\cap A_{-1}|$ where $P_t$ is the population of
  Algorithm~\ref{algo:Algorithm1} at time $t\in\mathbb{N}$ 
  with $(\mu,\lambda)$-selection on \leadingones and the set 
  $A_{-1}$ is as defined in Eq. (\ref{eq:1}). Then 
  $
    \prob{Y_t\geq \max((3/4)\mu, (1-p/3)^tY_0)}\leq 
    t\cdot e^{-\Omega(\lambda)}
  $
  for all $t\in\mathbb{N}$.
\end{lemma}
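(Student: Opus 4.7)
My plan is to prove Lemma~\ref{lemma:lo-few-bad} by induction on $t$, using a Chernoff bound at each step and then a union bound across the $t$ steps. The key stochastic object to analyse is the conditional distribution of $Y_{t+1}$ given $P_t$.

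First I would unpack the offspring-production rule. Under $(\mu,\lambda)$-selection, each of the $\lambda$ offspring in $P_{t+1}$ is produced independently by selecting a parent uniformly at random from the top $\mu$ individuals of $P_t$ and then applying $\pmuta$. Since $\mathcal{M}=\{\chilow,\chihigh\}$ has only two elements, the offspring inherits its parent's rate with probability $1-p$ and switches to the other rate with probability $p$. Letting $Z_t$ denote the number of individuals in $A_{-1}$ among the top $\mu$ of $P_t$, each offspring independently lies in $A_{-1}$ with probability
\[
   q_t := \frac{Z_t}{\mu}(1-p) + \Bigl(1-\frac{Z_t}{\mu}\Bigr) p = p + (1-2p)\frac{Z_t}{\mu},
\]
so conditional on $P_t$, $Y_{t+1}\sim \bin(\lambda,q_t)$.

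Next I would split into two regimes. When $Y_t \geq (3/4)\mu$, I would exploit the fitness disadvantage of $\chihigh$ on \leadingones implied by $(1-\chihigh/n)^{\ell} < \mu/\lambda$ and $(1+\varepsilon)\mu/\lambda \leq (1-\chilow/n)^n$: a $\chihigh$ parent preserves its leading ones (and hence its fitness, once it has reached $\ell$) with probability less than $\mu/\lambda$, whereas $\chilow$ does so with probability at least $(1+\varepsilon)\mu/\lambda$. This fitness gap keeps $Z_t$ bounded well below $Y_t$ and, combined with the expression for $q_t$, should yield $\expect{Y_{t+1}\mid P_t}\leq (1-p/2)Y_t$. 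A Chernoff bound on $\bin(\lambda,q_t)$ then gives
\[
    \prob{Y_{t+1}\geq (1-p/3)Y_t \;|\; P_t,\, Y_t\geq (3/4)\mu} \leq e^{-\Omega(\lambda)},
\]
the gap between $p/2$ and $p/3$ absorbing the Chernoff slack. When $Y_t < (3/4)\mu$, a similar but easier calculation together with Chernoff yields $Y_{t+1} < (3/4)\mu$ with probability $1-e^{-\Omega(\lambda)}$.

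Chaining these two one-step bounds and taking a union bound over the $t$ steps then delivers the claimed failure probability $t\cdot e^{-\Omega(\lambda)}$. The main obstacle I anticipate is establishing that enough $\chihigh$ individuals fall out of the top $\mu$, that is, getting a useful bound on $Z_t$ in terms of $Y_t$. This appears to require the fitness structure of \leadingones together with a separate concentration argument on how many $\chihigh$ individuals manage to preserve their leading ones from one generation to the next; once this is in hand, the Chernoff step is routine.
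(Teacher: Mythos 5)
Your outer structure (one-step stochastic domination by a binomial, then Chernoff, induction on $t$ and a union bound) is exactly the paper's, but the one-step argument has a genuine gap in two related places. First, your $q_t$ is only (roughly) the probability that the offspring carries the rate $\chihigh$; membership in $A_{-1}$ additionally requires the offspring to have at least $\ell$ leading one-bits, so the production probability must carry an extra factor of at most $(1-\chihigh/n)^{\ell}$. Without that factor the bound cannot decay: for $Z_t/\mu$ near $1$ you get $q_t\approx 1-p$ and hence $\expect{Y_{t+1}\mid P_t}\approx(1-p)\lambda$, which is useless. Second, the route you propose for recovering the decay --- showing that $Z_t$ is well below $Y_t$, i.e.\ that many $A_{-1}$-individuals fall out of the top $\mu$ --- goes in the wrong direction and cannot work. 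Individuals in $A_{-1}$ have at least $\ell$ leading ones and are therefore among the \emph{fittest} in the population; in the worst case all of them sit in the top $\mu$, so no bound of the order $Z_t\leq c(\mu/\lambda)Y_t$ (which is what your target $\expect{Y_{t+1}\mid P_t}\leq(1-p/2)Y_t$ would require after expanding $\lambda q_t=\lambda p+(1-2p)(\lambda/\mu)Z_t$) is available.

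The paper resolves this by making exactly the pessimistic assumption you were trying to avoid: all $A_{-1}$-individuals are treated as fittest, so an $A_{-1}$-parent is selected with probability $Y_t/\mu$, and the decay comes entirely from the mutation step. An offspring lands in $A_{-1}$ with probability at most
\[
  \left(1-\frac{\chihigh}{n}\right)^{\ell}\left(\frac{Y_t}{\mu}(1-p)+\Bigl(1-\frac{Y_t}{\mu}\Bigr)p\right)
  \leq \frac{\mu}{\lambda}\left(\frac{Y_t}{\mu}(1-2p)+p\right)
  \leq \max\left(\frac{3\mu}{4\lambda},\frac{Y_t}{\lambda}\right)\left(1-\frac{2p}{3}\right)=:p_s,
\]
where the defining inequality $(1-\chihigh/n)^{\ell}<\mu/\lambda$ exactly cancels the selection advantage $\lambda/\mu$, and the last step uses $\max(3/4,Y_t/\mu)(1-2p)+p\leq\max(3/4,Y_t/\mu)(1-2p/3)$. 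Stochastic domination of $Y_{t+1}$ by $\bin(\lambda,p_s)$ and a Chernoff bound then give the one-step claim with no reference to $Z_t$ at all; your induction and union bound finish as planned.
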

\begin{proof}[of Lemma~\ref{lemma:lo-few-bad}]
  For an upper bound, we assume that search points in $B$ have higher
  fitness than search points outside $B$. The probability of producing
  a $B$-individual with $(\mu,\lambda)$-selection is at most
  \begin{align*}
    \left(1-\frac{\chihigh}{n}\right)^{\ell}
    \left( \frac{Y_t}{\mu}(1-p) + \left(1-\frac{Y_t}{\mu}\right)p \right)
    & \leq \left(\frac{\mu}{\lambda}\right)\left( \frac{Y_t}{\mu}(1-2p) + p \right)\\
    & \leq \left(\frac{\mu}{\lambda}\right)\left(\max\left(\frac{3}{4},\frac{Y_t}{\mu}\right)(1-2p) + p \right)\\
    & \leq \max\left(\frac{3\mu}{4\lambda},\frac{Y_t}{\lambda}\right)(1-2p/3)
      =: p_s
  \end{align*}
  Hence, $Y_{t+1}$ is stochastically dominated by a random variable $Z\sim\bin(\lambda,p_s)$.
  It now follows by a Chernoff bound that 
  \begin{align*}
    \prob{Y_{t+1}\geq \max\left(3\mu/4,(1-p/3)Y_t\right)}
    & \leq \prob{Z\geq \expect{Z}\left(1+\frac{p}{3-2p}\right)}\\
    & \leq \exp\left(-\frac{p^2\mu}{12(3-2p)}\right).
  \end{align*}
  The proof now follows by induction on $t$ and a union bound.\qed
\end{proof}

\begin{theorem}
  Algorithm~\ref{algo:Algorithm1} with $(\mu,\lambda)$-selection where $\lambda\geq
  c\ln(n)$ for a sufficiently large constant $c>0$, and self-adaptation
  from the set $\mathcal{M}=\{\chilow,\chihigh\}$ 
  using a sufficiently
  small constant strategy parameter $p$ satisfying $(1+\varepsilon)(1-p)\geq 1+p\varepsilon$
  has expected runtime
  $O(n\lambda\log(\lambda)+n^2)$ on
  \leadingones.
\end{theorem}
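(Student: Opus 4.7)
The plan is to apply the level-based theorem (Theorem~\ref{thm:level-based}) with a partition that tracks only $\chilow$-individuals along the \leadingones gradient, combined with Lemma~\ref{lemma:lo-few-bad} to control the number of $\chihigh$-individuals throughout the run. Concretely, I would define
$A_1:=\{(x,\chi)\in\mathcal{X}:\chi=\chihigh\}\cup\{(x,\chilow):\text{LO}(x)=0\}$
and $A_j:=\{(x,\chilow):\text{LO}(x)=j-1\}$ for $j=2,\dots,n+1$, so that $m=n$, $A_{m+1}$ contains the $\chilow$-optima, and for $j\geq 2$ the level set $A_{\geq j}$ consists of the $\chilow$-individuals with $\text{LO}(x)\geq j-1$. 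I would take $\gamma_0:=\mu/(4\lambda)$, which is a positive constant by the defining condition on $\chilow$.

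Next I would fix a polynomial horizon $T^{*}$ and condition on the event $E$ that the conclusion of Lemma~\ref{lemma:lo-few-bad} holds at every iteration $t\leq T^{*}$; by the lemma and a union bound, $\prob{\lnot E}\leq T^{*}\cdot e^{-\Omega(\lambda)}=n^{-\Omega(1)}$ once $\lambda\geq c\ln n$ for a sufficiently large~$c$. On $E$, at least $\mu/4$ of the top $\mu$ individuals carry $\chilow$. For condition~(G1), the assumption $|P\cap A_{\geq j-1}|\geq \mu/4$ together with a case distinction (according to whether or not more than $\mu$ individuals sit at $\text{LO}\geq j-2$) implies that the top $\mu$ contains at least $\mu/4$ $\chilow$-parents with $\text{LO}\geq j-2$. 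Selecting one of them (probability $\geq 1/4$), keeping the mutation rate at $\chilow$ (probability $1-p$), and flipping exactly the $(j-1)$-th zero-bit while preserving the preceding ones (probability $(\chilow/n)(1-\chilow/n)^{j-2}$) lands the offspring in $A_{\geq j}$. Using $(1-\chilow/n)^n\geq (\mu/\lambda)(1+\varepsilon)$, this gives $z_j=\Omega(1/n)$ uniformly in $j$, hence $\sum_{j=1}^{n}1/z_j=O(n^2)$ and $z_*=\Omega(1/n)$.

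The critical step is condition~(G2). Assume $|P\cap A_{\geq j}|\geq \gamma\lambda$ with $\gamma\leq \gamma_0$. A similar case distinction using Lemma~\ref{lemma:lo-few-bad} shows that the top $\mu$ contains at least $a\geq \gamma\lambda$ $\chilow$-individuals with $\text{LO}\geq j-1$: either the total count at $\text{LO}\geq j-1$ is at most $\mu$ (so all such $\chilow$-individuals end up in the top $\mu$), or the top $\mu$ lies wholly in that stratum and at least $\mu/4\geq \gamma\lambda$ of its members carry $\chilow$ by Lemma~\ref{lemma:lo-few-bad}. Each such parent produces a $\chilow$-offspring in $A_{\geq j}$ with probability at least $(1-p)(1-\chilow/n)^{j-1}\geq (1-p)(\mu/\lambda)(1+\varepsilon)$, so the expected number of offspring in $A_{\geq j}$ across all $\lambda$ sampling steps is at least
\[
  \frac{\lambda}{\mu}\cdot a\cdot (1-p)(1-\chilow/n)^{j-1}
  \geq \gamma\lambda\cdot (1-p)(1+\varepsilon)\geq \gamma\lambda(1+p\varepsilon),
\]
where the final inequality is exactly the strategy-parameter hypothesis $(1+\varepsilon)(1-p)\geq 1+p\varepsilon$. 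Condition~(G2) is therefore satisfied with $\delta:=p\varepsilon>0$.

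Finally, condition~(G3) reduces to $\lambda=\Omega(\ln n)$ since $z_*=\Omega(1/n)$ and $m=n$, which is ensured by $\lambda\geq c\ln n$ for sufficiently large $c$. Theorem~\ref{thm:level-based} then yields $\expect{T\mid E}=O(n\lambda\log\lambda+n^{2})$, and the negligible failure probability $\prob{\lnot E}=n^{-\Omega(1)}$ is absorbed via a standard restart argument over the deadline $T^{*}$. The main obstacle, as I see it, is (G2): the $\chilow$-count at $A_{\geq j}$ must be realised in the top $\mu$ despite competing $\chihigh$-individuals at the same or higher fitness, and the net multiplicative growth must strictly exceed~$1$ after accounting for the self-adaptive loss of the $\chilow$ label. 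The case distinction based on Lemma~\ref{lemma:lo-few-bad}, together with the strategy-parameter condition $(1+\varepsilon)(1-p)\geq 1+p\varepsilon$, is precisely what is needed to close both gaps.
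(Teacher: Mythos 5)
There is a genuine gap, and it sits exactly where you located the main difficulty: condition (G2) at the low levels. Your partition credits only $\chilow$-individuals to the levels $A_2,\dots,A_{n+1}$ and places every $\chihigh$-individual in $A_1$, and you then argue that the $\chilow$-individuals of the current level are selectable because, ``by Lemma~\ref{lemma:lo-few-bad}, at least $\mu/4$ of the top $\mu$ individuals carry $\chilow$.'' That is not what the lemma says. The set $A_{-1}$ of Eq.~(\ref{eq:1}), whose cardinality $Y_t$ the lemma controls, consists only of $\chihigh$-individuals with at least $\ell$ leading ones; $\chihigh$-individuals with fewer than $\ell$ leading ones are not in $A_{-1}$, and their number is not bounded by the lemma. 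Nor can it be: below level $\ell$, selection does not discriminate between the two rates (both preserve $j<\ell$ leading ones with probability at least $\mu/\lambda$), so the $\chihigh$-fraction of the population need not decay --- the paper's experiments show roughly half the population using $\chihigh$ early in the run. Consequently the top $\mu$ can consist entirely of $\chihigh$-individuals with $\Lo<\ell$ that outrank the $\gamma\lambda$ $\chilow$-individuals of $A_{\geq j}$; those $\chilow$-individuals are then never selected, offspring land in $A_{\geq j}$ essentially only via a rate switch of probability $p$ applied to higher-ranked individuals, and the multiplicative growth $\gamma\mapsto\gamma(1+\delta)$ required by (G2) fails. The same unproven premise underlies your (G1) bound.

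The paper's proof avoids this by a different partition: levels $0\leq j\leq\ell-1$ contain individuals with \emph{either} mutation rate, so any individual outranking a level-$j$ individual is itself counted in $A_{\geq j}$ (apart from members of $A_{-1}$, whose number is controlled), and progress at these levels is proved pessimistically for a $\chihigh$-parent using $(1-\chihigh/n)^{\ell-1}\geq\mu/\lambda$ together with the hypothesis $(1+\varepsilon)(1-p)\geq 1+p\varepsilon$. Only from level $\ell$ upwards does the partition restrict to $\chilow$-individuals, and there the competing high-fitness $\chihigh$-individuals form exactly the set $A_{-1}$ that Lemma~\ref{lemma:lo-few-bad} makes scarce. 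Your computation for the upper levels and your use of the strategy-parameter condition match the paper; the low-level treatment must be replaced by mixed levels as in the paper (or supplemented by a separate, currently missing, concentration argument for the $\chihigh$-count among the whole top $\mu$, which Lemma~\ref{lemma:lo-few-bad} does not supply).
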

\begin{proof}
  We partition the search space into the following $n+2$ levels
  \begin{align}
    A_{j}   & := 
   \begin{cases}  
     \left\{ (x,\chihigh) \mid \Lo(y)\geq\ell \right\} & \text{ if } j=-1\\
     \left\{ (x,\chilow), (x,\chihigh) \mid \Lo(x)=j \right\} & \text{ if } 0\leq j\leq\ell-1\\
     \left\{ (x,\chilow) \mid \Lo(x)=j \right\} & \text{ if } \ell \leq j\leq n.
   \end{cases}\label{eq:1}
  \end{align}
  The special level $A_{-1}$ contains search points with too high
  mutation rate. We first estimate the expected runtime assuming that
  there are never more than $(3/4)\mu$ individuals in level $A_{-1}$.
  In the end, we will account for the generations where this
  assumption does not hold.

  We now show that conditions (G1) and (G2) of the level-based theorem
  hold for the parameters $\gamma_0:=(1/8)(\mu/\lambda),$
  $\delta:=p\varepsilon,$ and $z_j=\Omega(1/n)$.  Assume that the
  current population has at least $\gamma_0\lambda=\mu/8$ individuals
  in $A_{\geq j-1}$ and $\gamma\lambda<\gamma_0\lambda$ individuals in
  $A_{\geq j}$, for $0\leq j\leq n$ and $\gamma\in[0,\gamma_0)$. If
  $0\leq j\leq \ell-1$, then an individual can be produced in levels
  $A_{\geq j}$ if one of the $\gamma\lambda$ individuals in these
  levels is selected, and none of the first $j$ bits are
  mutated. Assuming in the worst case that the selected individual has
  chosen the high mutation rate, the probability of this event is at
  least
  $
     (\frac{\gamma\lambda}{\mu})
    \left(
    \left(1-\frac{\chihigh}{n}\right)^{j}(1-p)+
    \left(1-\frac{\chilow}{n}\right)^{j}p
    \right)\\
     >
    (\frac{\gamma\lambda}{\mu})
    \left(
    \left(1-\frac{\chihigh}{n}\right)^{\ell-1}(1-p)+
    \left(1-\frac{\chilow}{n}\right)^{n}p
    \right)
    \geq \gamma (1+p\varepsilon).
  $ 
  All individuals in levels $j\geq\ell$ use the low mutation rate.
  Hence, an individual in levels $A_{\geq j}$ can be produced by selecting
  one the $\gamma\lambda$ individuals in this level, not change the
  mutation rate, and not flip any of the first $j\leq n$ leading
  1-bits. The probability of this event is at least
  $
    \frac{\gamma\lambda}{\mu}
    \left(1-\frac{\chilow}{n}\right)^{j}(1-p)
     >
      \frac{\gamma\lambda}{\mu}
    \left(
      \frac{\mu}{\lambda}(1+\varepsilon)(1-p)
    \right)
    \geq \gamma(1+\delta).
  $ 
  Condition (G2) is therefore satisfied for all levels. For condition
  (G1), assume that the population does not contain any individuals in
  $A_{\geq j}$. Then in the worst case, it suffices to select one of
  the at least $\gamma_0\lambda$ individuals in level $A_j$, switch
  the mutation rate, and only flip the first 0-bit and no other
  bits. The probability of this event is higher than
  $
    \frac{\gamma_0\lambda}{\mu}\left(\frac{\chilow}{n}\right)\left(1-\frac{\chihigh}{n}\right)^{n-1}p=\Omega(1/n).
  $

  Condition (G3) holds for any population size
  $\lambda\geq c\ln(n)$ and a sufficiently large constant $c$, because
  $\gamma_0$ and $\delta$ are constants. It follows that the expected
  number of generations until the optimum is found is
  $t_1(n)=O(n\log(\lambda)+n^2/\lambda)$. By Markov's inequality, the
  probability that the algorithm has not found the optimum after
  $2t_1(n)$ generations is less than $1/2$.

  Finally, we account for the generations with
  more than $(3/4)\mu$ individuals in level $A_{-1}$.  We
  call a phase \emph{good} if after $t_0(n)=O(\log(\lambda))$
  generations and for the next $2t_1(n)$ generations, there are fewer
  than $(3/4)\mu$ individuals in level $A_{-1}$. By
  Lemma~\ref{lemma:lo-few-bad}, a phase is good with probability
  $1-(t_0(n)+2t_1(n))\cdot e^{-\Omega(\lambda)}=\Omega(1)$, for
  $\lambda\geq c\ln(n)$ and $c$ a sufficiently large constant. By the
  level-based analysis, the optimum is found with probability at least
  $1/2$ during a good phase.  Hence, the expected number of phases
  required to find the optimum is $O(1).$ The theorem now follows by
  keeping in mind that each generation costs $\lambda$ evaluations.\qed
\end{proof}

We have shown that the EA can self-adapt to choose the low
mutation 
parameter 
$\chilow$ when required. Nevertheless, uniform mixing of mutation 
rates with a sufficiently small $\chilow$ could achieve the same 
asymptotic performance. Furthermore, naively picking a mutation rate
from the beginning also has a constant probability of optimising the 
function in polynomial time.
       Our aim is therefore to show that there exists a setting for which
       all the above approaches, except self-adaptation, fail.
To prove this, we have identified a problem $f_m$ where 
a high mutation rate is required in one part of the 
search space, and a low mutation rate is required in another part.
For $1\leq m<n$, define $f_m(0^n):=m$ and $f_m(x):=\leadingones(x)$
       for all $x\neq 0^n$.
We call the local optimum $0^n$ the \emph{peak}, and assume that all 
individuals in the initial population are peak individuals. 
It is clear that the elitist algorithm ($\mu$+$\lambda$)~EA without any
diversity mechanism will only accept a search point if it has at least
$m$ leading 1-bits. 
\begin{theorem}\label{thm:elitist-ea}
Starting at $0^n$, the ($\mu$+$\lambda$)~EA has expected
runtime $n^{\Omega(m)}$ on $f_m$.
\end{theorem}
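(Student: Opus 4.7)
My plan is to combine elitism, which traps the population at the peak $0^n$, with a simple union bound on the rare event of flipping the first $m$ bits of $0^n$ in a single mutation.

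First, I observe that $0^n$ has fitness $m$ while every $x \neq 0^n$ with $\leadingones(x) < m$ has fitness strictly less than $m$. In the $(\mu+\lambda)$~EA, if all $\mu$ current parents sit at $0^n$, then no offspring with $x \neq 0^n$ and $\leadingones(x) < m$ can survive $(\mu+\lambda)$-selection against the $\mu$ parents of fitness $m$. Hence, by induction on the generation count, the population remains entirely at $0^n$ until some offspring $x^{*} \neq 0^n$ with $\leadingones(x^{*}) \geq m$ is produced. Let $T_e$ denote the generation number in which this first happens; then the total runtime is at least $\lambda T_e$.

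Next, I bound the probability of producing such an $x^{*}$ by mutating $0^n$. Because the parent is $0^n$, the condition $\leadingones(x^{*}) \geq m$ is equivalent to requiring that each of the first $m$ bits flip from $0$ to $1$. Under bitwise mutation with rate $\chi/n$ for any $\chi \leq \chimax = O(1)$ (the standing assumption on mutation rates in the paper), the probability of this event is exactly $(\chi/n)^m \leq (\chimax/n)^m = n^{-\Omega(m)}$. A union bound over the $\lambda$ offspring produced in a generation gives a per-generation escape probability of at most $\lambda (\chimax/n)^m$.

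Finally, I conclude by a standard geometric-time argument: $T_e$ stochastically dominates a geometric random variable with success parameter $\lambda(\chimax/n)^m$, whence $\expect{T_e} \geq (n/\chimax)^m/\lambda$ and the expected runtime satisfies $\expect{\lambda T_e} \geq (n/\chimax)^m = n^{\Omega(m)}$, as required.

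The argument is essentially routine; the only subtlety worth flagging is tie-breaking between $0^n$ parents and offspring of equal fitness (either an $0^n$ offspring produced by an idle mutation, or a non-$0^n$ offspring with exactly $m$ leading ones). No tie-breaking rule can affect the lower bound, however, because any form of escape from the peak still requires the rare event of flipping all of the first $m$ bits in a single mutation, which is precisely what the union bound counts. Hence no substantial obstacle arises beyond the careful book-keeping of this elitist dynamics.
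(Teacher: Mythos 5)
Your proof is correct and follows exactly the reasoning the paper relies on: the paper omits a formal proof of this theorem, stating only the key observation that the elitist $(\mu+\lambda)$~EA starting at $0^n$ will only accept a search point with at least $m$ leading 1-bits, which forces the $n^{-\Omega(m)}$ event of flipping the first $m$ bits in one mutation. Your write-up simply formalises that sketch (including the correct handling of ties at fitness $m$), so there is nothing to add.
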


To reach the optimal search point more efficiently, it is necessary to
accept worse individuals into the population, e.g. a non-elitist
selection scheme should be investigated. 
Since $f_m$ has a unique global optimum, either using only a too high mutation 
rate or uniformly mixing a correct mutation rate with a too high one can lead
to exponential runtime as discussed above.
Analogously to the $(\mu+\lambda)$~EA, we also prove that using a too low 
mutation rate fails because the population is trapped on the peak (\eg 
due to Theorem~\ref{thm:negdrift-generalised},
individuals fell off the peak have too low reproductive rate to optimise
$m$ leading $1$-bits).
Subsequent proofs use the two functions $q(i)
:= (1 - \chilow/n)^i$ and $r(i) := (1 - \chihigh/n)^i$, which are the probabilities of 
not flipping the first $i \in [n]$ bits using mutation rate $\chilow/n$ and 
$\chihigh/n$ respectively. Clearly, $q(i)$ and $r(i)$ are monotonically 
decreasing in $i$. We also use the function $\beta(\gamma):=2\gamma(1-\gamma/2)$,
which is the probability that binary tournament selection chooses one
of the $\gamma\lambda$ fittest individuals.

\begin{theorem}\label{thm:low-mut-fail}
  The runtime of Algorithm \ref{algo:Algorithm1} on $f_m$ with tournament size $2$, 
  initialised with the population at $0^n$ and with fixed mutation rate 
  $\chi\leq \ln(3/2)-\varepsilon$ for any constant $\varepsilon\in(0,\ln(3/2))$ 
  satisfies $\prob{T\leq e^{cn}}=e^{-\Omega(\lambda)}$ for a 
  constant $c>0$.
\end{theorem}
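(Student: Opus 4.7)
The plan is to apply Theorem~\ref{thm:negdrift-generalised} with target $x^{*}=1^n$. Since $\hdist(P_0,x^{*})=n$, the initial-distance hypothesis holds trivially for any $c'\in(0,1)$. With the single mutation rate $\chi$, the condition $\sum_j q_j e^{-\chi_j}\leq (1-\delta)/\alpha_0$ reduces to $\alpha_0 e^{-\chi}\leq 1-\delta$. Because $\chi\leq \ln(3/2)-\varepsilon$ gives $e^{-\chi}\leq (2/3)e^{\varepsilon}$, it suffices to establish a reproductive-rate bound $\alpha_0<(3/2)e^{-\varepsilon}$ on every individual $P_t(i)$ satisfying $\hdist(P_t(i),1^n)\leq c'n$.

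The central step is to show that the peak $0^n$ dominates the population for the exponential time window, which in turn pins down the reproductive rates of non-peak individuals. Let $X_t:=|\{i:P_t(i)=0^n\}|$ and $x:=X_t/\lambda$. Assuming no individual currently has fitness exceeding $m$, every peak individual strictly beats every non-peak opponent in a 2-tournament, so the probability that a given selection returns a peak parent is $x(2-x)$, and the subsequent mutation preserves $0^n$ with probability $(1-\chi/n)^n\geq e^{-\chi}(1-o(1))$. Hence $\expect{X_{t+1}\mid P_t}\geq \lambda\,x(2-x)\,e^{-\chi}(1-o(1))$, whose attracting fixed point is $2-e^{\chi}>1/2$ since $e^{\chi}\leq (3/2)e^{-\varepsilon}<3/2$. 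Fixing a constant $c_1\in(1/2,\,2-e^{\chi})$ and applying a Chernoff bound to the binomially-dominated count $X_{t+1}$ gives $X_{t+1}\geq c_1\lambda$ with probability $1-e^{-\Omega(\lambda)}$ whenever $X_t\geq c_1\lambda$; iterating with a union bound preserves peak-dominance throughout $e^{cn}$ generations with total failure probability $e^{-\Omega(\lambda)}$, for $c$ chosen small enough relative to the hidden constant.

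Under persistent peak dominance, any non-peak individual $P_t(i)$ loses every tournament involving a peak rival, so its probability of winning a single tournament is at most $(1-c_1)+1/\lambda$, giving reproductive rate $\alpha_0\leq 2(1-c_1)+o(1)$. Choosing $c_1$ sufficiently close to $2-e^{\chi}$ yields $\alpha_0\leq 2(e^{\chi}-1)+o(1)$, and one checks $\alpha_0 e^{-\chi}\leq 2(1-e^{-\chi})+o(1)<1-\delta$ for a positive constant $\delta$, because $2e^{-\chi}\geq (4/3)e^{\varepsilon}>4/3$. This bound applies to every individual within Hamming distance $c'n$ of $1^n$ (the peak itself is at distance $n$), so the hypotheses of Theorem~\ref{thm:negdrift-generalised} are verified and the theorem delivers $\prob{T\leq e^{c''n}}=e^{-\Omega(n)}$ on the stopped process described below.

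The main obstacle is the circular dependence between the peak-dominance argument (which presumes that no current individual has fitness above $m$) and the target conclusion (which is equivalent to saying $1^n$, hence any fitness above $m$, is never reached). I would close the loop via a stopping-time coupling: let $T'$ be the first generation in which some individual attains leadingOnes $\geq m+1$. Up to time $T\wedge T'$ the peak-dominance analysis is valid and Theorem~\ref{thm:negdrift-generalised} applies to the stopped process, giving $\prob{T\leq e^{cn},\,T'>T}=e^{-\Omega(\lambda)}$. A complementary bound $\prob{T'\leq e^{cn}}=e^{-\Omega(\lambda)}$ follows by noting that producing an individual of leadingOnes $\geq m+1$ either requires a single mutation aligning at least $m+1$ specific bits of a peak parent (probability $O((\chi/n)^{m+1})$ per offspring) or requires a lineage climbing through fitness-below-$m$ strata whose occupants have reproductive rate bounded by the same $\alpha_0<e^{\chi}$, hence are themselves in a negative-drift regime; a union bound over $e^{cn}\lambda$ offspring delivers the claim and completes the argument.
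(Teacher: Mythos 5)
Your opening moves match the paper's: you establish peak dominance by an induction over generations with a Chernoff bound in each step (the paper keeps only $(\lambda/2)(1+\delta/2)$ peak individuals rather than pushing $c_1$ up towards the fixed point $2-e^{\chi}$, but the mechanism is identical), and you deduce that a non-peak individual, losing every tournament against a peak rival, has reproductive rate bounded by a constant below $1$. From that point on, however, your architecture has a genuine gap. Applying Theorem~\ref{thm:negdrift-generalised} with $x^*=1^n$ and the full Hamming distance requires the reproductive-rate hypothesis to hold for \emph{every} individual within distance $c'n$ of $1^n$; such an individual can have more than $m$ leading ones, in which case it beats the peak and its reproductive rate approaches $2$, not $2(1-c_1)$. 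You see this and introduce the stopping time $T'$ (first appearance of fitness above $m$), but that only relocates the difficulty: since any run reaching $1^n$ must first produce an individual with at least $m$ leading ones, $T\geq T'$ always, so bounding $\prob{T'\leq e^{cn}}$ \emph{is} the theorem, and your application of the drift theorem to $x^*=1^n$ contributes nothing beyond it. The bound on $T'$ is exactly the part you leave as a sketch: the single-mutation jump from the peak is fine (probability $(\chi/n)^{m}$ per offspring, union bound over $e^{cn}\lambda$ offspring), but the multi-generation ``lineage climbing through fitness-below-$m$ strata'' cannot be dispatched by a union bound over offspring. Controlling such lineages over exponentially many generations is precisely what the negative-drift theorem for populations is for, so your closing sentence silently invokes a second, unperformed application of that theorem.

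The paper's proof closes this loop cleanly with one application. It replaces $f_m$ by the truncated fitness $g:=\min(m,f_m)$ — the two processes coincide up to the first time an individual with $m$ leading ones appears — and applies the negative-drift theorem to the algorithm restricted to the first $m$ bit positions, with target ``$m$ leading ones'' and effective mutation parameter $\chi'=\chi m/n$. Under $g$ the peak has maximal fitness unconditionally, so peak dominance and the bound $\alpha_0=1-\delta/2<1$ hold for every non-peak individual with no proximity caveat, and the drift condition $e^{-\chi'}<1=(1-\delta/2)/\alpha_0$ is immediate. If you carry out your $T'$ bound rigorously you will find yourself reconstructing exactly this argument; I would recommend restructuring the proof around the truncated problem from the start and dropping the application to $x^*=1^n$ entirely.
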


\begin{proof}[of Theorem~\ref{thm:low-mut-fail}]
  We will prove that with probability $1-e^{-\Omega(\lambda)}$, all
  individuals during the first $e^{cn}$ generations have less than $m$
  leading 1-bits, where $c>0$ is a constant. Clearly, this 
  stronger statement implies the theorem.

  Choose the parameter $\delta\in(0,1)$ such that
  $\ln((1+\delta)/(1-\delta))= \varepsilon$.  We first
  show by
  induction that with probability $1-e^{-\Omega(\lambda)}$, there are
  at least $(\lambda/2)(1+\delta/2)$ peak-individuals in each of the
  first $e^{c\lambda}$ generations, and we call the run of the
  algorithm a \emph{failure} otherwise. By Lemma~\ref{lemma:exp-bound}, the probability 
  of not mutating any bits when $n\in\mathbb{N}$ is sufficiently large is
  $
    q(n)
    \geq e^{-\chi}(1-\delta) 
    \geq (2/3)(1+\delta).
  $ 
  Assume that there are $\gamma\lambda\geq \lambda/2$ peak
  individuals in the current population. A peak individual is produced
  if a peak individual is selected and none of its bits are flipped.
  The probability of this event is at least 
  $
    \beta(\gamma) q(n) 
    \geq 
    \beta(1/2)q(n)
    = (1/2)(2-1/2)(2/3)(1+\delta)=(1/2)(1+\delta),
  $ 
  where the first inequality uses the fact that $\beta(\gamma)$ is
  strictly increasing in $\gamma$ over the interval $[0,1]$.  Hence,
  by a Chernoff bound, the probability that the next generation
  contains less than $(\lambda/2)(1+\delta/2)$ peak individuals is
  $e^{-\Omega(\lambda)}$. By induction and a union bound, the bound
  $e^{-\Omega(\lambda)}$ also holds for the next $e^{c\lambda}$
  generations, if $c>0$ is a sufficiently small constant.

  We now assume that the run is not a failure. Furthermore, we assume
  that the algorithm is optimising the function $g(x):=\min(m,f_m(x))$
  instead of $f_m$.
  Clearly, the time to reach at least $m$ leading 1-bits is the same,
  whether the algorithm optimises $g$ or $f_m$.  
  Assuming that there are
  more than $(\lambda/2)(1+\delta/2)$ peak individuals, the
  reproductive rate of any non-peak individual is always less than
  $
    \lambda\left(
    2
      (1/\lambda)\left(1-1/\lambda
    - (1/2)(1+\delta/2)
    \right)
    +\left(1/\lambda\right)^2
    \right)
    <  1-\delta/2 =: \alpha_0.
  $

  For non-peak individuals, the last $n-m$ bit-positions are
  irrelevant when the algorithm optimises $g$. We can therefore apply
  the negative drift theorem (Theorem~\ref{thm:negdrift-generalised}) 
  with respect to the
  algorithm limited to the first $m$ bit positions only. The variation
  operator in this algorithm flips each of the $m$ bits independently
  with probability $\chi'/m$, where $\chi'=\chilow(m/n)$. Hence, we 
  have $e^{-\chi'} < 1 = (1-\delta/2)/\alpha_0$, and the conditions of 
  the theorem are satisfied.\qed
\end{proof}

\begin{theorem}\label{thm:self-adapt-succeed}
  If $\mathcal{M}=\{\chilow,\chihigh\}$ where
  $\chilow:=\ln(3/2)-\varepsilon$ 
  for any constant $\varepsilon \in (0,\ln(100/99))$,
  and $\ln(3)\leq \chihigh = O(1)$,
  then there exists an $m\in\Theta(n)$ such that Algorithm~\ref{algo:Algorithm1} 
  starting with the population at $0^n$,
  with tournament size $2$,
  population size $\lambda \geq c\ln n$ for some constant $c>0$
  and self-adaptation of $\mathcal{M}$
  with $\strp=1/20$
  has expected runtime
  $O(n\lambda\log(\lambda)+n^2)$ on $f_m$.
\end{theorem}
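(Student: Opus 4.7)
The plan is to apply the level-based theorem (Theorem~\ref{thm:level-based}) with a partition of $\{0,1\}^n\times\mathcal{M}$ that augments the partition used in the preceding robust-self-adaptation theorem by a dedicated level for the peak~$0^n$. The key insight is that peak individuals are shed by mutation frequently enough that, in $2$-tournament selection, the steady-state peak fraction is a constant strictly below $1/2$, leaving the non-peak sub-population with effective reproductive rate above~$1$. Consequently the non-peak sub-population can self-adapt its mutation rate and progress through $\leadingones$ levels essentially as in the previous theorem, until an individual with $\leadingones\geq m$ appears (beating the peak) and standard dynamics take over. Concretely I would fix a threshold $\ell=\Theta(n)$ chosen so that $2(1-\gamma_p^*)(1-\chihigh/n)^\ell\leq 1-\Omega(1)$, where $\gamma_p^*<1/2$ is the peak-fraction bound established below, and a peak-beating level $m=\Theta(n)$ with $m<\ell$ small enough that the (G2) amplification calculation succeeds. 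Partition $\mathcal{Y}$ into $A_{-1}:=\{(x,\chihigh):x\neq 0^n,\leadingones(x)\geq\ell\}$ (the ``dying'' set), $A_0:=\{(0^n,\chi):\chi\in\mathcal{M}\}$ (the peak), $A_j:=\{(x,\chi):x\neq 0^n,\leadingones(x)=j-1\}$ for $1\leq j\leq \ell$, and $A_j:=\{(x,\chilow):x\neq 0^n,\leadingones(x)=j-1\}$ for $\ell<j\leq n+1$, so that the optimum $(1^n,\chilow)$ lies in $A_{n+1}$.

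I would then prove two Lemma~\ref{lemma:lo-few-bad}-style concentration lemmas that hold throughout a $\poly(n)$-long phase with failure probability $e^{-\Omega(\lambda)}$ per generation. First, $|P_t\cap A_0|/\lambda\leq\gamma_p^*$ for some constant $\gamma_p^*<1/2$. The computation uses that a peak individual produces a peak offspring with probability at most $r:=(1-\strp)e^{-\chilow}+\strp\,e^{-\chihigh}$; with $\strp=1/20$, the assumption $\varepsilon<\ln(100/99)$ gives $e^{-\chilow}<200/297$ and $\chihigh\geq\ln 3$ gives $e^{-\chihigh}\leq 1/3$, so $r<3899/5940<2/3$. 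The one-step recursion $\gamma_{p,t+1}\leq r\cdot\gamma_{p,t}(2-\gamma_{p,t})$ has fixed point $\gamma_p=2-1/r<1/2$, and Chernoff concentration together with $\lambda\geq c\ln n$ lifts the deterministic bound to high probability. Second, $|P_t\cap A_{-1}|$ shrinks by a constant factor per generation down to a small fraction of $\lambda$, by a verbatim adaptation of Lemma~\ref{lemma:lo-few-bad}: by construction of $\ell$, the effective reproductive rate $2(1-\gamma_p^*)(1-\chihigh/n)^\ell$ of $A_{-1}$-individuals is bounded below~$1$.

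Conditioned on these bounds, I verify (G1), (G2), (G3). Condition~(G1) holds with $z_j=\Omega(1/n)$ by the standard argument: even in the worst case all $\gamma_0\lambda$ parents in $A_{\geq j-1}$ sit at level $j-2$, the bound $\gamma_p^*<1/2$ implies $2-\gamma_0-2\gamma_p^*>0$, so such a parent wins a $2$-tournament with constant probability, and then flipping the $(j-1)$-th bit alone contributes $\Omega(1/n)$; for $j=1$, peak individuals mutate off the peak with constant probability. Condition~(G2) is verified by multiplying the per-individual selection factor $2(1-\gamma_p^*)-\gamma$ by the preservation factor $(1-\strp)(1-\chilow/n)^{j-1}+\strp(1-\chihigh/n)^{j-1}$, lower-bounded as in the previous theorem using $(1-\chihigh/n)^{\ell-1}\geq 1/(2(1-\gamma_p^*))$ together with the slack $(1+\varepsilon)(1-\strp)\geq 1+\strp\varepsilon$ to cope with worst-case $\chihigh$ parents. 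For $j\leq m$ the choice of $m$ gives amplification $\geq 1+\delta$; for $m<j\leq\ell$ the peak no longer outranks $A_{\geq j}$, so the selection factor improves toward~$2$; for $j>\ell$ the analysis reproduces that of the previous theorem directly. Condition~(G3) holds whenever $\lambda\geq c\ln n$ with $c$ large enough, since all relevant parameters are $\Omega(1)$ and $z_*=\Omega(1/n)$. Theorem~\ref{thm:level-based} then yields $O(n\log\lambda+n^2/\lambda)$ generations to reach $A_{n+1}$, and a Markov-style ``good phase'' argument identical in form to the previous theorem's absorbs the $e^{-\Omega(\lambda)}$ failure probabilities into an $O(1)$ factor in expectation, giving total runtime $O(n\lambda\log\lambda+n^2)$.

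The main obstacle is verifying~(G2) at levels $j\leq m$, where non-peak individuals are outranked by the peak in tournaments and their effective reproductive rate $2(1-\gamma_p^*)\approx 14/13$ is only barely above~$1$. All constants of the statement must combine tightly: $\varepsilon<\ln(100/99)$ is exactly what keeps $e^{-\chilow}$ sufficiently close to $2/3$; $\chihigh\geq\ln 3$ is exactly what caps $e^{-\chihigh}$ at $1/3$; $\strp=1/20$ is small enough that rate-switching is a controlled perturbation in both the peak-retention and the preservation calculations. The specific $m=\Theta(n)$ is then pinned down by the inequality $(14/13)(1-\chilow/n)^m>1+\delta$, yielding a particular small constant multiple of~$n$. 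This quantitative coupling is exactly what distinguishes self-adaptation from uniform mixing, which by Theorem~\ref{thm:ineff-two-mut} samples $\chihigh$ with probability $1/2$ and therefore fails in exponential time under the same parameter set.
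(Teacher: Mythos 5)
Your overall architecture is the same as the paper's: a level partition that isolates the peak and the set of high-rate individuals with at least $\ell$ leading ones, two concentration lemmas in the style of Lemma~\ref{lemma:escape-prob} keeping the peak fraction below $1/2$ and keeping that set small, condition (G1) via single-bit flips, and a good-phase restart argument. Your peak-fraction computation (retention probability $(1-\strp)e^{-\chilow}+\strp e^{-\chihigh}<2/3$ under the stated constraints on $\varepsilon$, $\chihigh$ and $\strp$, with fixed point $2-1/\rho<1/2$) is exactly the paper's application of Lemma~\ref{lemma:escape-prob} with $\rho=65/99$. The problem is the choice of $\ell$, which is where the proof actually has to thread a needle, and your two requirements on $\ell$ are mutually inconsistent.

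First, your decay condition for $A_{-1}=\{(x,\chihigh):\Lo(x)\ge\ell\}$, namely $2(1-\gamma_p^*)(1-\chihigh/n)^\ell\leq 1-\Omega(1)$, carries a spurious factor $(1-\gamma_p^*)$: these individuals have fitness at least $\ell>m$, so they \emph{beat} the peak in every tournament, and the correct requirement is $2(1-\chihigh/n)^\ell(1-\strp)\leq 1-\Omega(1)$, which forces $(1-\chihigh/n)^\ell$ below roughly $10/19$. Your condition permits $(1-\chihigh/n)^\ell$ as large as about $0.95$, under which these individuals have reproductive rate near $1.8$ and proliferate rather than die out. Second, for (G2) at the intermediate levels $m<j<\ell$ you invoke $(1-\chihigh/n)^{\ell-1}\geq 1/(2(1-\gamma_p^*))\approx 0.955$, which cannot hold simultaneously with $(1-\chihigh/n)^\ell\lesssim 0.53$. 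The paper resolves this tension by fixing $(1-\chihigh/n)^{\ell-1}\approx 85/171$ and exploiting an asymmetry your proposal does not use: surviving into $A_{\geq j}$ for $m<j<\ell$ only requires preserving the leading bits under \emph{either} rate, giving probability at least $(1-\strp)r(\ell-1)+\strp\, q(\ell-1)>1/2+1/180$ (the switch-to-$\chilow$ branch supplies the margin above $1/2$), whereas remaining in the bad set additionally requires \emph{not} switching, giving $\rho=(85/171)(19/20)=17/36<1/2$ so that the set still decays. Without this distinction there is no working (G2) bound at the intermediate levels. A smaller slip: your defining inequality for $m$ uses the low-rate preservation $(1-\chilow/n)^m$, but the worst-case parent at levels $j\leq m$ carries $\chihigh$; the paper's condition is $r(m)\geq(1+\delta)/(2(1-\gamma_0/2-\psi))$ with $\psi=123/250<1/2$, which still admits $m=\Theta(n)$.
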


Our intuition is that with sufficiently high mutation rate, some individuals 
fall off the peak and form a sub-population which optimises the $\leadingones$ 
part of the problem. This will happen if the selective pressure is not too high. 
However, at the same time, the population should be able to reach the optimal 
search point $1^n$ after escaping the local optimum. Here we used 
the level-based technique to infer constraints on the mutation rates and the
strategy parameter $\strp$. The proof idea follows closely from these observations.

We will need the following result to limit the number of individuals at 
unfavourable portions of the search space, \ie too many individuals in 
those portions will prevent the algorithm from moving in the right direction.
\begin{lemma}\label{lemma:escape-prob}
  Given any subset $A\subset\mathcal{X}$, let $Y_t := |P_t\cap A|$ be
  the number of individuals in generation $t\in\mathbb{N}$ of
  Algorithm~\ref{algo:Algorithm1} with tournament size $2$,
  that belong
  to subset $A$.  If there exist three parameters $\rho,\sigma,\varepsilon\in(0,1)$ such that
  $\prob{\pmut(y)\in A}\leq\rho$ for all $y\in A$ and
  $\prob{\pmut(y)\in A}\leq \sigma\gamma_*-\varepsilon$ for all
  $y\not\in A$,
  where $\gamma_*:=2-(1-\sigma)/\rho$,
  then $\prob{Y_t\geq \max\left(\gamma_*\lambda,(1-\varepsilon/2)^t
    Y_0\right)}\leq t\cdot e^{-\Omega(\lambda)}$.
\end{lemma}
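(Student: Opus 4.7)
The plan would parallel the proof of Lemma~\ref{lemma:lo-few-bad}. Fix a generation $t$, condition on $P_t$ with $Y_t=\gamma\lambda$, and upper bound the probability $p_s$ that a single offspring drawn from $\pmut\circ\psel$ lies in $A$; since the $\lambda$ offspring in generation $t{+}1$ are i.i.d.\ given $P_t$, $Y_{t+1}$ is then stochastically dominated by $\bin(\lambda,p_s)$. To bound $p_s$ I take the least favourable case that $A$-individuals always beat non-$A$ individuals in the binary tournament; this maximises the selection weight on the larger of the two mutation-into-$A$ probabilities (in the nontrivial case $\rho\geq\sigma\gamma_*-\varepsilon$), giving
\[
 p_s \leq \beta(\gamma)\rho + (1-\beta(\gamma))(\sigma\gamma_*-\varepsilon),
\]
where $\beta(\gamma)=\gamma(2-\gamma)$.

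The algebraic heart of the lemma is to show $p_s \leq \max(\gamma_*,\gamma) - \varepsilon$. The definition $\gamma_*:=2-(1-\sigma)/\rho$ is engineered precisely so that $\beta(\gamma_*)\rho = \gamma_*(1-\sigma)$. Using that $\beta$ is monotonically increasing on $[0,1]$, I would split on whether $\gamma\leq\gamma_*$ or $\gamma>\gamma_*$. In the first case, $\beta(\gamma)\leq\beta(\gamma_*)$ and $1-\beta(\gamma)\leq 1$ give $p_s \leq \gamma_*(1-\sigma) + \sigma\gamma_*-\varepsilon = \gamma_*-\varepsilon$. In the second case, $\beta(\gamma)=\gamma(2-\gamma)\leq\gamma(2-\gamma_*)=\gamma(1-\sigma)/\rho$ together with $\sigma\gamma_*\leq\sigma\gamma$ give $p_s \leq \gamma(1-\sigma)+\sigma\gamma-\varepsilon = \gamma-\varepsilon$. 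Hence $\expect{Y_{t+1}\mid P_t} \leq \lambda p_s \leq \max(\gamma_*\lambda, Y_t) - \varepsilon\lambda$.

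The target $\max(\gamma_*\lambda,(1-\varepsilon/2)Y_t)$ exceeds this mean by $\Omega(\lambda)$ in both regimes: by $\varepsilon\lambda$ when $Y_t\leq\gamma_*\lambda$ (since $\gamma_*$ and $\varepsilon$ are constants), and by at least $(\varepsilon/2)\gamma_*\lambda$ when $Y_t>\gamma_*\lambda$ (using $Y_t\leq\lambda$ to convert the additive slack $\varepsilon\lambda$ into a multiplicative factor $(1-\varepsilon)$ and comparing against $(1-\varepsilon/2)Y_t$). A standard multiplicative Chernoff bound then yields $\prob{Y_{t+1}\geq\max(\gamma_*\lambda,(1-\varepsilon/2)Y_t)\mid P_t}=e^{-\Omega(\lambda)}$. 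A straightforward induction on $t$ shows that as long as this single-step event holds in every generation $1,\dots,t$, one has $Y_t\leq\max(\gamma_*\lambda,(1-\varepsilon/2)^tY_0)$, and a union bound over the $t$ generations produces the claimed failure probability $t\cdot e^{-\Omega(\lambda)}$.

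The main obstacle I expect is really only the algebraic case split: unpacking the definition of $\gamma_*$ in just the right way so that the selection contribution $\beta(\gamma)\rho$ and the mutation contribution $(1-\beta(\gamma))(\sigma\gamma_*-\varepsilon)$ combine cleanly to $\max(\gamma_*,\gamma)-\varepsilon$, together with justifying the worst-case assumption on the tournament. Once that identity is in hand, stochastic domination, the Chernoff tail, and the induction-plus-union-bound are all routine.
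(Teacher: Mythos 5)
Your proposal is correct and follows essentially the same route as the paper: bound the per-offspring probability of landing in $A$ by combining the tournament-selection bound $\beta(\gamma)$ with the two mutation hypotheses, use the definition of $\gamma_*$ to collapse this to $\max(\gamma_*,\gamma)(1-\varepsilon)$ (the paper does this in one chain via $\beta(\max(\gamma_*,Y_t/\lambda))$ rather than your two-case split, and uses the slightly looser but sufficient bound $\beta(\gamma)\rho+\sigma\gamma_*-\varepsilon$ without the factor $1-\beta(\gamma)$), then apply binomial domination, a multiplicative Chernoff bound, and induction with a union bound. The only cosmetic difference is your explicit case analysis on $\gamma\lessgtr\gamma_*$ versus the paper's unified $\max$-based manipulation; both yield the same inequality $p_s\leq\max(\gamma_*,Y_t/\lambda)(1-\varepsilon)$.
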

\begin{proof}[of Lemma~\ref{lemma:escape-prob}]
  For an upper bound, we assume that all search points in $A$ have
  higher fitness than search points in $\mathcal{X}\setminus A$. The
  probability of selecting an individual in $A$ is therefore $\beta(Y_t/\lambda)$.
  The probability that any given offspring in 
  generation $t+1\leq e^{c\lambda}-1$ belongs to subset
  $A$ is no more than
  \begin{align*}
    \beta(Y_t/\lambda) \rho + \sigma\gamma_*-\varepsilon
    & \leq    \beta(\max\left(\gamma_*,Y_t/\lambda\right))\rho  + \sigma\gamma_*-\varepsilon\\
    & \leq    2\max\left(\gamma_*,Y_t/\lambda\right)(1-\max\left(\gamma_*,Y_t/\lambda\right)/2)\rho  + \sigma\gamma_*-\varepsilon\\
    & \leq    2\max\left(\gamma_*,Y_t/\lambda\right)(1-\gamma_*/2)\rho  + \sigma\gamma_*-\varepsilon\\
    & =       \max\left(\gamma_*,Y_t/\lambda\right)(1-\sigma) + \sigma\gamma_*-\varepsilon\\
    & \leq    \max\left(\gamma_*,Y_t/\lambda\right)(1-\varepsilon) =: p_s.
  \end{align*}
  Hence, $Y_{t+1}$ is stochastically dominated by the random variable $Z\sim\bin(\lambda,p_s)$.
  It now follows by a Chernoff bound that
  \begin{align*}
    \prob{Y_{t+1}\geq \max(\gamma_*\lambda,Y_t(1-\varepsilon/2))}
    & \leq \prob{Z\geq \max(\gamma_*\lambda,Y_t(1-\varepsilon/2))}\\
    & \leq   \prob{Z\geq \expect{Z}\left(1+\frac{\varepsilon }{2(1-\varepsilon)}\right)} \\
    & \leq \exp\left(-\frac{\varepsilon^2\max\left(\gamma_*\lambda,Y_t\right)}{12 (1-\varepsilon)}\right)\\
    &  \leq \exp\left(-\frac{\varepsilon^2\gamma_*\lambda}{12 (1-\varepsilon)}\right).
  \end{align*}
  The proof is completed by induction with respect to $t$ and a union bound.\qed
\end{proof}


\begin{proof}[of Theorem~\ref{thm:self-adapt-succeed}]
We apply the level-based theorem with respect to a 
partitioning of the search space $\mathcal{X}=\{0,1\}^n\times\mathcal{M}$
into the following $n+2$ levels
\begin{align*}
  A_j    & := 
\begin{cases}
\{(0^n,\chilow), (0^n,\chihigh) \}          & \text{ if } j=-1,\\
\{(x,\chilow),(x,\chihigh)\mid \Lo(x)=0\wedge x\neq 0^n \} & \text{ if } j=0,\\
\{(x,\chilow),(x,\chihigh)\mid \Lo(x)=j \} & \text{ if } 1\leq j\leq\ell-2,\\
\{(x,\chilow), (y, \chihigh)\mid \Lo(x)=\ell-1, \Lo(y)\geq \ell-1 \}     & \text{ if } j=\ell-1,\\ 
\{(x,\chilow)\mid \Lo(x)=j \}              & \text{ if } \ell\leq j\leq n.
\end{cases}
\end{align*}
where $\ell\in[n]$ is the unique integer such that 
$
    \left(1-\frac{\chihigh}{n}\right)^{\ell}
  < \frac{85}{171} \leq 
    \left(1-\frac{\chihigh}{n}\right)^{\ell-1}.
$ 
Note that as long as $m \leq \ln(171/85)(n-1)/\chihigh$, we have
$
  \left(1-\frac{\chihigh}{n}\right)^{m}
    \geq \left(e^{-\chihigh}\right)^\frac{m}{n-1}
    \geq \frac{85}{171} 
    > \left(1-\frac{\chihigh}{n}\right)^{\ell},
$ 
hence $\ell > m$. 

We first estimate the expected runtime assuming that 
every population contains less than
$\psi\lambda$ individuals in $A_{-1}$, and less than $\xi\lambda$
individuals in the set $B :=\{(y,\chihigh)\mid\Lo(y)\geq\ell\}$, where 
$\psi:=123/250$ and $\xi:=1/5$. In the end, we will account for the 
generations where these assumptions do not hold.
We begin by showing that condition (G2) of the level-based theorem 
hold for all levels.

\underline{Levels $0\leq j\leq m$}: 
Assume that the population contains $\gamma\lambda$ individuals in
levels $A_{\geq j}$ for any $\gamma\in(0,\gamma_0)$. An individual in
$A_{\geq j}$ will be selected if the tournament contains at least one
individual in $A_{\geq j}$, and no individuals in level $A_{-1}$.
The probability of this event is
$
  \beta(\gamma) 
                 \geq 2\gamma(1-\gamma_0/2-\psi).
$ 
The mutated offspring of the selected individual will belong to levels
$A_{\geq j}$ if none of the first $j\leq m$ bits are flipped, which
occurs with probability at least $r(m)$. Hence, condition (G2) 
is satisfied if there exists a $\gamma_0\in(0,1)$ and a constant 
$\delta>0$ such that for all $\gamma\in(0,\gamma_0]$, it holds
$
  \beta(\gamma)r(m)\geq \gamma(1+\delta),
$ 
i.e., it is sufficient to choose 
$m\in\mathbb{N}$ sufficiently small such that
$
  r(m)=\left(1-\frac{\chihigh}{n}\right)^m\geq \frac{1+\delta}{2(1-\gamma_0/2-\psi)}.
$ 
Note that such an $m=\Theta(n)$ exists, because
$2(1-\gamma_0/2-\psi)=127/125-\gamma_0>
1+\delta$ when $\gamma_0$ and $\delta$ are sufficiently small.

\underline{Levels $m+1\leq j<\ell$}: 
The probability of mutating an individual from $A_{\geq j}$ 
into $A_{\geq j}$, pessimistically assuming that the selected 
individual uses the high mutation rate $\chihigh$, is at least
$
     r(\ell-1)(1-p)+q(\ell-1)p
   > r(\ell-1)(1-p)+q(n)p   
   > (85/171)(1-p) + (2/3)p 
    = 1/2+1/180.
$ 
Hence, assuming that the current population has $\gamma\lambda$ 
individuals in $A_{\geq j}$ where $\gamma\in(0,\gamma_0)$, the
probability of selecting one of these individuals and mutating them
into $A_{\geq j}$ is at least
$
  \beta(\gamma)(r(\ell-1)(1-p) +q(\ell-1)p) 
  > 2\gamma(1-\gamma_0/2)(1/2+1/180) 
  = \gamma(1-\gamma_0/2)(1 + 1/90)
     > \gamma(1+\delta')
$ 
for some $\delta'>0$ given that $\gamma_0$ is a
sufficiently small constant. Note that the lower bound on 
$\beta(\gamma)$ here does not depend on $\psi$, and nor on $\xi$ 
because in this setting the peak individuals have lower fitness than 
the individuals in $A_j$, and $B \subset A_{\geq j}$.

\underline{Levels $\ell\leq j\leq n$:} By the level-partitioning, any
individual in these levels uses the low mutation rate $\chilow,$ and
other individuals with at least $\ell$ leading 1-bits belong to the
set $B$.  Assume that the current population contains
$\gamma\in(0,\gamma_0)$ individuals in levels $A_{\geq j}$. An
individual in $A_{\geq j}$ can be produced by having a binary
tournament with at least one individual from $A_{\geq j}$ and none of
the at most $\xi\lambda$ individuals in $B$, not mutating any
of the bits, and not changing the mutation rate. The probability of
this event is at least
$
  2\gamma(1-\gamma_0/2-\xi)q(n)(1-p) 
   \geq
   \gamma(4/5-\gamma_0/2)(19/15)
  = \gamma(1+1/75-(19/30)\gamma_0)
  > \gamma(1+\delta')
$ 
for some constant $\delta'>0$, assuming that $\gamma_0$ is sufficiently
small.

We now show that condition (G1) of the level-based theorem is satisfied
for a parameter $z=\Omega(1/n)$ in any level $j$. Assume that the
current population contains at least $\gamma_0\lambda$ individuals in
$A_{\geq j}$. Then, to create an individual in  $A_{\geq j+1}$, it is
sufficient to create a tournament of two individuals from
$A_{\geq j}$, flip at most one bit, and either keep or switch the
mutation rate. The probability of such an event is at least
$
  \gamma_0^2(\chilow/n)(1-\chihigh/n)^{n-1}p=\Omega(1/n).
$

To complete the application of the level-based theorem, we note that
since $\delta$ and $\gamma_0$ are constants, condition (G3) is
satisfied when $\lambda \geq c\ln n$ for some constant $c$. Hence, under 
the assumptions on the number of individuals in level $A_{-1}$ and $B$ 
described above, the level-based theorem implies that the algorithm obtains 
the optimum in expected $t_1(n)=O(n\log(\lambda)+n^2/\lambda)$
generations. Furthermore, by Markov's inequality, the probability that
the optimum has not been found within $2t_1(n)$ generations is less
than $1/2$.

To complete the proof, we justify the assumption that less than
$\psi\lambda$ individuals belong to level $A_{-1}$, and less than
$\xi\lambda$ individuals belong to $B$. We will show using
Lemma~\ref{lemma:escape-prob} that starting with any population, these
assumptions hold after an initial phase of $t_0(n)=O(\log(\lambda))$
generations. We call a phase \emph{good} if the assumptions
hold for the next $t_1(n)<e^{c\lambda}$ generations.

To apply Lemma~\ref{lemma:escape-prob} with respect to level $A_{-1}$,
we note that the probability of obtaining an individual in $A_{-1}$ by
mutating an individual in $A_{-1}$ is bounded from above by
$
  q(n)(1-p)+r(n)p \leq (2/3)e^{\varepsilon}(1-p)+p/3 \leq 65/99.
$ 
Furthermore, to mutate an individual from  $\mathcal{X}\setminus A_{-1}$ into $A_{-1}$, it is
necessary to flip at least one specific bit-position, i.e., with
probability $O(1/n)$. Therefore, by Lemma~\ref{lemma:escape-prob} with
$\sigma=49/4950$ and $\rho=65/99$, it holds for all $t$ where
$t_0(n)<t<e^{cn}$ and $t_0(n)=O(\log(\lambda))$ that
$
  \prob{|P_t\cap A_{-1}|\geq \psi\lambda } = e^{-\Omega(\lambda)}
$ 
where $\psi:=123/250$. 

Similarly, the probability of not destroying a $B$-individual
with mutation is by definition of $\ell$ at most
$
  \left(1-\frac{\chihigh}{n}\right)^{\ell}(1-p)
  \leq \left(\frac{85}{171}\right)\left(\frac{19}{20}\right)
  = \frac{17}{36} =: \rho.
$ 
To create a $B$-individual from $\mathcal{X}\setminus B$,
it is in the best case necessary to change the mutation rate from
$\chilow$ to $\chihigh$ and not mutate the first $\ell$
bit-positions. The probability of this event is
$
  \left(1-\frac{\chihigh}{n}\right)^{\ell}p 
  \leq \left(\frac{85}{171}\right)\left(\frac{1}{20}\right)
  = \frac{17}{684}.
$ 
Therefore, by Lemma~\ref{lemma:escape-prob} with respect to 
$\sigma:=3/20$ and the above value of $\rho$, 
for every generation $t$ where $t_0(n)<t<e^{c\lambda}$ and $t_0(n)=O(\log(\lambda))$
it holds
$
  \prob{|P_t\cap B|\geq \xi\lambda } = e^{-\Omega(\lambda)},
$
where $\xi:=1/5$.

To summarise, starting from any configuration of the population, a
phase of length $t_0(n)+2t_1(n)=O(n\log(\lambda)+n^2/\lambda)$
generations is \emph{good} with probability
$1-e^{-\Omega(\lambda)}$. If a phase is good, then the optimum will be
found by the end of that phase with probability at least $1/2$.  Hence,
the expected number of phases required to find the optimum is $O(1)$,
and the theorem follows, keeping in mind that each generation costs
$\lambda$ function evaluations.\qed
\end{proof}
Below are results from 1000 experiments
with the self-adaptive EA on the \leadingones function for $n=200$,
$p=1/1000$ using $(\mu,\lambda)$-selection
for $\mu=500$, $\lambda=4\mu$, and mutation 
parameters 
$\mathcal{M}=\{2/5,2\}$.
%
For each $j\in[n]$, the figure contains a box-plot describing the
distribution of the fraction of the population choosing $\chilow$ over 
all generations where the $(1/10)$-ranked individual in the population has 
$j$ leading one-bits.
\begin{figure}[h]
  \scalebox{0.78}{\input{02feb2016.tex}}  
\end{figure}

    The initial population, including mutation rates, are sampled
    uniformly at random. Hence the $(1/10)$-ranked individual will have
    fitness close to $1$ in the first generations. For $j\leq 5$, \ie
    early in the run, approximately half of the population chooses the low
    mutation.
    However, the population quickly switches to the higher
    mutation 
    $\chihigh$ until the $(1/10)$-ranked individual in the
    population reaches a value approximately $j\geq 60$ where the
    population switches to the lower mutation 
    $\chilow$. Almost all individuals
    choose 
    $\chilow$ for $j\geq 108$.
These experimental results confirm that the population adapts the
mutation rate according to the region of the
fitness landscape currently searched.

\section{Conclusion}\label{sec:concl}

    This is the first rigorous runtime analysis of
    self-adaptation. We have demonstrated that 
    self-adaptation with a sufficiently low strategy parameter can robustly control the mutation-rates
    of non-elitist EAs in discrete search spaces, and
    that this automated control can lead to exponential speedups 
    compared to EAs that use fixed mutation rates, or 
    uniform mixing of mutation rate. 

{\small {\bf Acknowledgements} This work received funding from the
    European Union Seventh Framework Programme (FP7/2007-2013) under grant
    agreement no.\ 618091 (SAGE).}


\newpage
\appendix
\section*{Appendix A}

\begin{lemma}\label{lemma:exp-bound}
  For any $\delta\in(0,1)$ and $\chi>0$, if $n\geq
  (\chi+\delta)(\chi/\delta)$ then
  \begin{align*}
   (1-\delta)e^{-\chi} \leq \left(1-\frac{\chi}{n}\right)^n \leq e^{-\chi}.
  \end{align*}
\end{lemma}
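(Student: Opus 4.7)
The upper bound is immediate from the standard inequality $1-x\leq e^{-x}$ valid for all real $x$: applying this with $x=\chi/n$ (and noting that $n\geq (\chi+\delta)\chi/\delta>\chi$ ensures $1-\chi/n>0$), then raising both sides to the $n$-th power preserves the inequality.

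For the lower bound, the plan is to take logarithms and work with the Taylor-series remainder for $\ln(1-x)$. Specifically, for $x\in(0,1)$ one has
\begin{align*}
\ln(1-x) \;=\; -\sum_{k=1}^{\infty}\frac{x^k}{k} \;\geq\; -x-\sum_{k=2}^{\infty} x^k \;=\; -x-\frac{x^2}{1-x}.
\end{align*}
Setting $x=\chi/n$ and multiplying by $n$ yields
\begin{align*}
n\ln\!\left(1-\frac{\chi}{n}\right) \;\geq\; -\chi-\frac{\chi^2}{n-\chi}.
\end{align*}
Hence it suffices to show $\chi^2/(n-\chi)\leq -\ln(1-\delta)$, since then $n\ln(1-\chi/n)\geq -\chi+\ln(1-\delta)$, and exponentiating gives the claim.

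To close the argument, I would use the elementary bound $-\ln(1-\delta)\geq \delta$ (which follows from $-\ln(1-\delta)=\sum_{k\geq 1}\delta^k/k\geq \delta$), reducing the target inequality to $\chi^2/(n-\chi)\leq \delta$, or equivalently $n\geq \chi+\chi^2/\delta=(\chi+\delta)\chi/\delta$. This is exactly the hypothesis of the lemma, so the proof concludes. I do not expect any real obstacle here; the only minor point is the bookkeeping needed to confirm that $n\geq (\chi+\delta)\chi/\delta$ implies $n>\chi$ (so that the series bound and the denominator $n-\chi$ are valid), which is clear because $(\chi+\delta)\chi/\delta = \chi + \chi^2/\delta > \chi$.
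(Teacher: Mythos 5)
Your proof is correct. The upper bound is handled exactly as in the paper, and your lower bound via the Taylor expansion $\ln(1-x)\geq -x-\frac{x^2}{1-x}$ reduces cleanly to the hypothesis $n\geq(\chi+\delta)\chi/\delta$ after invoking $-\ln(1-\delta)\geq\delta$; the domain check $n>\chi$ is also in place. Although the packaging looks different from the paper --- which instead rewrites the exponent $n$ as at most $\bigl(\tfrac{n}{\chi}-1\bigr)(\chi-\ln(1-\delta))$ and applies the inequality $(1-1/x)^{x-1}\geq 1/e$ --- the two arguments rest on the same analytic fact: your bound simplifies to $\ln(1-u)\geq -u/(1-u)$ (since $-u-\tfrac{u^2}{1-u}=-\tfrac{u}{1-u}$), which is precisely the logarithmic form of $(1-u)^{1/u-1}\geq e^{-1}$. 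So this is essentially the paper's proof carried out additively on logarithms rather than multiplicatively on exponents; if anything, your version makes the role of the hypothesis on $n$ slightly more transparent.
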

\begin{proof}
  The upper bound follows immediately from the inequality $1+x\leq e^x$.
  For the lower bound, note first that $\ln(1-\delta)<-\delta$, hence
  \begin{align*}
    \left(\frac{n}{\chi}-1\right)(\chi-\ln(1-\delta))
    \geq n+\frac{n\delta}{\chi}-(\chi+\delta) \geq n.
  \end{align*}
  By making use of the fact that $(1-1/x)^{x-1}\geq 1/e$ and
  simplifying the exponent $n$ as above
  \begin{gather*}
    \left(1-\frac{\chi}{n}\right)^n 
    \geq \left[\left(1-\frac{\chi}{n}\right)^{(n/\chi)-1}\right]^{\chi-\ln(1-\delta)}
    \geq (1-\delta)e^{-\chi}. \;\qed
  \end{gather*}
\end{proof}

\begin{lemma}\label{lem:bin-process}
Let $X \sim \mathrm{Bin}(\lambda,p)$ with $p \leq (k/\lambda)(1 - \delta)$ for some $k \in [\lambda]$ and some $\delta \in [0,1)$, then $$\prob{X > k} \leq \exp\left(-\frac{k^2\delta^2}{2\lambda}\right).$$
\end{lemma}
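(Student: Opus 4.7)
The lemma is a standard Chernoff-type tail bound for the binomial, and the plan is to reduce it to a deviation inequality for $X$ around its mean. First, I would set $\mu := \expect{X} = \lambda p$ and observe that the hypothesis $p \leq (k/\lambda)(1-\delta)$ yields $\mu \leq k(1-\delta)$, so that the gap satisfies $k - \mu \geq k\delta$. This gives
$$\prob{X > k} \;\leq\; \prob{X - \mu \geq k\delta}.$$

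Next, any textbook upper-tail bound for a sum of independent $\{0,1\}$ variables suffices. The simplest option is Hoeffding's inequality, $\prob{X - \mu \geq t} \leq \exp(-2t^2/\lambda)$, applied with $t := k\delta$; this yields $\prob{X > k} \leq \exp(-2k^2\delta^2/\lambda)$, which is strictly stronger than the claimed $\exp(-k^2\delta^2/(2\lambda))$ and so implies it. Alternatively, the same conclusion follows from the standard moment-generating-function derivation: $\prob{X \geq k} \leq \exp(\lambda p(e^s-1) - sk)$ for any $s > 0$, after which one substitutes $\lambda p \leq k(1-\delta)$ and uses $e^s - 1 \leq s + s^2/2$ for small $s$, then picks $s$ of order $\delta$ to recover the stated form.

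There is no real obstacle: the lemma is essentially a one-line consequence of a classical concentration inequality once the hypothesis is rewritten as the gap bound $k - \mu \geq k\delta$. The only minor choice is which Chernoff/Hoeffding variant to invoke, and the weakest of the standard ones already yields a constant at least as good as $1/(2\lambda)$ in the exponent.
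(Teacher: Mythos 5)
Your main argument is correct, and it takes a genuinely different route from the paper. You work directly with the upper tail of $X$: rewriting the hypothesis as $\expect{X}=\lambda p\leq k(1-\delta)$, hence $k-\expect{X}\geq k\delta$, and then invoking the additive Hoeffding bound $\prob{X-\expect{X}\geq t}\leq \exp(-2t^2/\lambda)$ with $t=k\delta$, which gives $\exp(-2k^2\delta^2/\lambda)$ and thus implies the claim with room to spare. The paper instead passes to the complement $Y:=\lambda-X\sim\bin(\lambda,1-p)$, notes $\expect{Y}\geq\lambda-k(1-\delta)$, writes $\prob{X>k}=\prob{Y<\lambda-k}$ as a multiplicative lower-tail deviation of $Y$ below its mean by the factor $1-k\delta/(\lambda-k(1-\delta))$, and applies the multiplicative Chernoff bound $\prob{Y<\expect{Y}(1-\varepsilon)}\leq\exp(-\varepsilon^2\expect{Y}/2)$, arriving at $\exp(-k^2\delta^2/(2(\lambda-k(1-\delta))))<\exp(-k^2\delta^2/(2\lambda))$. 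Your approach is shorter and yields a constant $2$ in the exponent rather than $1/2$; the paper's complementation trick is what naturally produces the exact constant $1/(2\lambda)$ appearing in the statement, at the cost of some algebra. One caveat: your secondary MGF sketch relies on ``$e^s-1\leq s+s^2/2$ for small $s$,'' which is false for $s>0$ (the Taylor series shows $e^s-1>s+s^2/2$); you would need a weaker bound such as $e^s-1\leq s+s^2$ for $s\leq 1$. Since the Hoeffding route is self-contained and complete, this does not affect the validity of your proof.
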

\begin{proof}
Define $Y := \lambda - X$, thus $Y \sim \mathrm{Bin}(\lambda, q := 1 - p)$. Therefore, $q \geq \lambda - (k/\lambda)(1 - \delta) = (\lambda - k(1-\delta))/\lambda$ and
\begin{align*}
  \expect{Y} 
    \geq \lambda - k(1 - \delta).
\end{align*}

Then by a Chernoff bound,
\begin{align*}
\prob{X > k } 
  &=    \prob{Y < \lambda - k} \\ 
  &=    \prob{Y < \left(\lambda - k(1 - \delta)\right) 
                     \left(\frac{\lambda - k}{\lambda - k(1 - \delta)}\right)} \\
  &\leq \prob{Y < \expect{Y}
                     \left(1 - \frac{k\delta}{\lambda - k(1 - \delta)}\right)} \\
  &\leq \exp\left(- \left(\frac{k\delta}{\lambda - k(1 - \delta)}\right)^2 
                         \frac{\expect{Y}}{2}\right) \\
  &\leq \exp\left(- \left(\frac{k\delta}{\lambda - k(1 - \delta)}\right)^2 
                          \frac{\lambda - k(1 - \delta)}{2}\right) \\
  &=    \exp\left(- \frac{k^2\delta^2}{2(\lambda - k(1 - \delta))} \right) 
   <    \exp\left(- \frac{k^2\delta^2}{2\lambda} \right). \;\qed   
\end{align*}
\end{proof}


\end{document}